\documentclass{article}
\usepackage{enumitem}
\usepackage{microtype}
\usepackage{graphicx}
\usepackage{subcaption}
\usepackage{booktabs} 
\usepackage{multirow}
\usepackage{inconsolata}
\usepackage{xcolor}
\definecolor{codebg}{RGB}{250,250,250}
\definecolor{coderule}{RGB}{225,225,225}
\definecolor{codekw}{RGB}{36,71,131}     
\definecolor{codestr}{RGB}{24,110,60}    
\definecolor{codecom}{RGB}{120,120,120}  
\definecolor{codenum}{RGB}{150,150,150}

\usepackage{listings}
\usepackage{xcolor}
\definecolor{codebg}{RGB}{245,245,245}
\usepackage{hyperref}

\usepackage[accepted]{icml2026}

\usepackage{amsmath}
\usepackage{amssymb}
\usepackage{mathtools}
\usepackage{amsthm}
\usepackage{xspace}
\usepackage{algorithm}
\usepackage{algorithmic}

\newcommand{\eg}{\textit{e.g.}\xspace}

\usepackage[capitalize,noabbrev]{cleveref}

\theoremstyle{plain}
\newtheorem{theorem}{Theorem}[section]
\newtheorem{proposition}[theorem]{Proposition}

\theoremstyle{definition}

\theoremstyle{remark}

\usepackage[textsize=tiny]{todonotes}

\icmltitlerunning{Score Gradient Matching Distillation}

\begin{document}

\twocolumn[
  \icmltitle{SGMD: Score Gradient Matching Distillation \\ for Few-Step Video Diffusion Distillation}



  \icmlsetsymbol{equal}{*}

  \begin{icmlauthorlist}
    \icmlauthor{Zhuguanyu Wu}{yyy,comp}
    \icmlauthor{Ruihao Gong}{yyy}
    \icmlauthor{Yang Yong}{comp}
    \icmlauthor{Yushi Huang}{comp,sch}
    \icmlauthor{Xiangyu Fan}{comp}
    \icmlauthor{Lei Yang}{comp}
    \icmlauthor{Dahua Lin}{comp}
    \icmlauthor{Xianglong Liu}{yyy}
  \end{icmlauthorlist}

  \icmlaffiliation{yyy}{Beihang University}
  \icmlaffiliation{comp}{SenseTime Research}
  \icmlaffiliation{sch}{Hong Kong University of Science and Technology}

  \icmlcorrespondingauthor{Ruihao Gong}{gongruihao@buaa.edu.cn}

  \icmlkeywords{Machine Learning, ICML}

  \vskip 0.3in
]



\printAffiliationsAndNotice{}  

\begin{abstract}

    Distribution Matching Distillation (DMD) is a widely used paradigm for accelerating inference in few-step video diffusion models. However, DMD-style video distillation faces two coupled challenges: the fake score must track a continuously evolving generator, making training costly when frequent updates are required, while reverse-KL-style matching can be mode-seeking and conservative for preserving strong motion dynamics. To address these issues, we propose \textbf{Score Gradient Matching Distillation (SGMD)}. SGMD adopts a fake-score perspective by directly optimizing the fake score toward the teacher, while using teacher stop-gradient Fisher as a stable distribution-matching objective. We provide a gradient analysis that motivates this objective choice under ideal tracking. Building on this, SGMD introduces a pair of dual potentials: negative-residual (NR) for outer-loop correction and residual-contraction (RC) for inner-loop tracking. Empirically, compared to DMD2, SGMD achieves an approximately $\sim 3\times$ training speedup and substantially improves motion dynamics for 4-step distilled models while preserving temporal consistency. A human study confirms that SGMD is preferred in motion quality and overall preference, while visual quality and text alignment remain comparable. Code is available at \url{https://github.com/ModelTC/LightX2V/tree/main/lightx2v_train}.
  
\end{abstract}

\section{Introduction}

Diffusion models \cite{ddpm,ddim,meanflow} have recently achieved remarkable progress in video generation \cite{wan_tech,hunyuan_tech,longcat_tech}. However, modern video diffusion models are costly to run: large parameter counts, high latent dimensionality, and multi-step sampling all hinder deployment and interactive applications~\cite{benyahia2024mobilevd}. Existing acceleration techniques include quantization and low-bit inference \cite{svdquant,tfmq,fimaq,aphq,anonymous2026qvgen,2025-nn-low-bit-survey,2021-iclr-brecq,2025-tpami-qlimit,gong2019differentiable}, feature caching \cite{deepcache,teacache, lv2025fastercache, huang2025harmonicaharmonizingtraininginference,deng2018triplet,li2018self,yang2019deep,yang2021r3det}, parallel sampling \cite{psdm, fang2024pipefusion,flagos}, and few-step distillation \cite{dmd,dmd2,sid,shortcut}. Among them, few-step distillation is particularly attractive for video generation because it directly reduces sampling steps while largely preserving the original architecture and deployment pipeline.

\begin{figure}[t]
\centering
\includegraphics[width=0.8\linewidth]{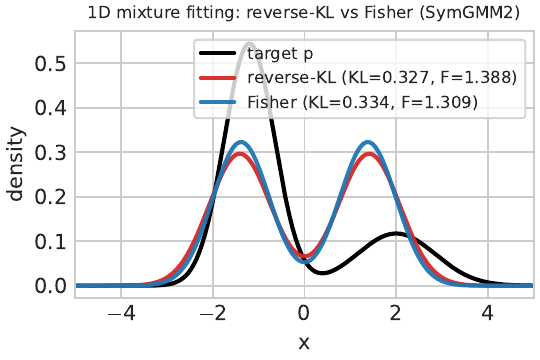}
\caption{\textbf{Motivating 1D mixture-fitting example.} Reverse-KL-style matching tends to produce a conservative fit that avoids low-density regions of the target distribution, while Fisher divergence yields a smoother score-matching signal.}
\label{fig:toy_kl_vs_fisher}
\vspace{-0.2in}
\end{figure}

Distribution Matching Distillation (DMD) and its variants \cite{dmd,dmd2,phaseddmd} are a strong line of work for few-step video diffusion distillation via distribution matching. In aggressive few-step regimes, however, DMD-style training faces two practical challenges. First, it becomes a two-timescale problem: the student-side auxiliary score network (the fake score) must track a rapidly evolving generator, and maintaining this tracking often requires multiple fake-score updates that dominate training cost. Second, reverse-KL-style matching is mode-seeking \cite{rcm} and can behave conservatively, as illustrated by Fig.~\ref{fig:toy_kl_vs_fisher}, which may suppress motion in motion-rich video generation under few-step distillation.

These challenges motivate us to revisit the distribution-matching objective and the fake-score update mechanism together. We build on teacher stop-gradient Fisher: under ideal tracking, it induces an effective outer-loop update direction consistent with DMD-style distribution matching, while avoiding teacher input gradients and retaining a stable gradient structure.

Departures from this ideal regime are inevitable: the fake score cannot instantly follow a rapidly evolving generator. SGMD adopts a \textbf{fake-score perspective} and turns one-sided tracking into cooperative alignment: the fake score moves toward the teacher, while the generator tracks it to maintain score-consistency. We instantiate this idea with dual potentials: negative-residual (NR) corrects the outer-loop generator update, and residual-contraction (RC) contracts the tracking residual in the fake-score update. This makes the Fisher-style signal usable in practice with fewer fake-score updates, yielding a lightweight two-step bilevel update.

In summary, we propose \textbf{Score Gradient Matching Distillation (SGMD)} for few-step video diffusion distillation. Our main contributions are as follows:

\begin{itemize}[leftmargin=*,nosep]
    \item We provide a principled motivation for using teacher stop-gradient Fisher as a stable distribution-matching objective under ideal tracking, and analyze how tracking lag bends the net one-iteration update direction.
    \item We propose Score Gradient Matching Distillation (SGMD), a fake-score perspective with a pair of dual potentials (NR/RC) that decouple outer-loop correction from inner-loop contraction, enabling stable, low-overhead two-step updates.
    \item We validate SGMD on large-scale video diffusion distillation with a 14B teacher (Wan2.1-T2V-14B), showing improved motion dynamics and temporal consistency under 4-step distillation, an approximately $\sim 3\times$ training speedup by reducing fake-score updates per iteration, and human-preference gains in motion quality and overall preference while maintaining comparable visual quality and text alignment.
\end{itemize}

\section{Preliminaries}

\subsection{DMD}

Distribution Matching Distillation (DMD) considers a generator $G_\theta$ inducing a distribution $q$, a fixed diffusion teacher providing the target score $s_{\text{real}}$ w.r.t.\ the target distribution $p$, and an auxiliary student-side score network (\textit{i.e.}, fake score) $s_{\text{fake}}$ that approximates the score of $q_\theta$. Throughout, the teacher network $\mu_{\text{base}}$ is kept frozen.
The generator training target is given as:
\begin{equation}
    \begin{aligned}
       \nabla_\theta D_{\text{KL}}(q\|p)
        &= \mathbb{E}_{\substack{
        z \sim \mathcal{N}(0; \mathbf{I}) \\
        x = G_\theta(z)
        } } \Big[
        \big(
        s_\text{fake}(x) - s_\text{real}(x)\big)
        \hspace{.5mm} \frac{dG}{d\theta}
         \Big],
    \end{aligned}
    \label{eq:kl-grad}
\end{equation}
where the scores are given as:
\begin{equation}
s_{\text{fake}}(x_t,t) = \frac{\alpha_t \mu_{\psi}(x_t,t) - x_t}{\sigma_t^2},
\end{equation}
\begin{equation}
s_{\text{real}}(x_t,t) = \frac{\alpha_t \mu_{\text{base}}(x_t,t) - x_t}{\sigma_t^2}. 
\end{equation}

The fake score's training target is given as:
\begin{equation}
    \mathcal{L}(\psi) = \| \mu_{\psi}(x_t, t) - x_0 \|^2
\end{equation}
where $x_t$ is obtained by the standard forward noising process:
\begin{equation}
x_t = \alpha_t x_0 + \sigma_t\epsilon.
\label{eq:forward_noising}
\end{equation}

Training is typically performed in an alternating, two-timescale manner: the inner loop regresses $\psi$ to closely follow the rapidly changing generator, while the outer loop updates $\theta$ using a distribution-matching objective assuming $\psi$ is sufficiently up-to-date. This structure exposes a practical bottleneck: maintaining small tracking error often requires multiple fake-score updates per iteration (high training cost), whereas reducing inner-loop updates leads to tracking lag, instability, and degraded sample consistency. SGMD addresses this bottleneck by enabling stable following and effective outer-loop updates with substantially lower overhead.

\subsection{SIM}

Score Implicit Matching (SIM) \cite{sim} takes the Fisher Divergence as the training object:
\begin{equation}
    \mathcal{L}(\theta, \psi) = \frac{1}{2} \left\| s_{\text{fake}}(x_t) - s_{\text{real}}(x_t) \right\|^2
\end{equation}
The gradient is given as:
\begin{equation}
    \nabla_{\theta} \mathcal{L}(\theta) = \left(s_{\text{fake}}(x_t) - s_{\text{real}}(x_t)\right) \left(\nabla_{\theta} s_{\text{fake}} - \nabla_{\theta} s_{\text{real}}\right)
    \label{simgrad}
\end{equation}
Since the optimal inner solution $\psi^{*}(\theta)$ depends on $\theta$, let $y(\theta):=\mu_{\psi^*(\theta)}(x_t,t)$. Differentiating $y(\theta)$ includes not only the explicit term propagated through the forward process in Eq.~\eqref{eq:forward_noising}, but also an implicit term induced by the variation of $\psi^{*}(\theta)$:
\begin{equation}
    \frac{d}{d\theta} y(\theta) =
\underbrace{\frac{\partial \mu_{\psi}(x_t,t)}{\partial \theta}}_{\text{explicit term}}
+
\underbrace{\frac{\partial \mu_{\psi}(x_t,t)}{\partial \psi} \frac{d\psi^*}{d\theta}}_{\text{implicit term}}
\label{implicit}
\end{equation}
By substituting Eq.~\eqref{implicit} into Eq.~\eqref{simgrad}, SIM derives that the implicit-gradient contribution can be equivalently obtained by minimizing the following loss function:
\begin{equation}
    \begin{aligned}
    \mathcal{L}^{(2)}(\theta)
    &= \Big\langle s_{\text{fake},\, \operatorname{sg}[\theta]}(x_t,t) - s_{\text{real}}(x_t,t), \\
    &\qquad s_t(x_t|x_0) - s_{\text{fake},\, \operatorname{sg}[\theta]}(x_t,t)\Big\rangle .
    \end{aligned}
\end{equation}
where $\text{sg}[\cdot]$ means stop-gradient operation, and \(s_t(x_t|x_0) = (\alpha_tx_0-x_t)/\sigma_t^2\).

For a clearer derivation, we rewrite the implicit loss term \(\mathcal{L}^{(2)}(\theta)\) in the $x$-prediction form:
\begin{equation}
\begin{aligned}
\Delta_t &= \mu_{\psi, \operatorname{sg}[\theta]}(x_t, t) - \mu_{\text{base}}(x_t, t), \\
r_t &= x_0 - \mu_{\psi, \operatorname{sg}[\theta]}(x_t, t), \\
c(t) &= \alpha_t^2 / \sigma_t^4 \\
\mathcal{L}^{(2)}(\theta) &= c(t)\,\Delta_t^\top\,r_t.
\end{aligned}
\end{equation}
The explicit term can be written as:
\begin{equation}
    \mathcal{L}^{(1)}(\theta) = \frac{1}{2}c(t)\left\|\Delta_t\right\|^2.
\end{equation}
and the overall loss is given as:
\begin{equation}
    \mathcal{L}_{\text{SIM}}=\mathcal{L}^{(1)}(\theta) + \mathcal{L}^{(2)}(\theta) + \mathcal{L}(\psi)
\end{equation}

\section{Method}
In this section, we first introduce \emph{teacher stop-gradient Fisher divergence} as a stable distribution-matching objective that avoids unreliable teacher input gradients. We then present the \emph{fake-score perspective}, which reframes training as directly improving the fake score toward the teacher while using the generator as a tracker. Next, we provide a \emph{gradient analysis} to show how tracking lag bends the effective one-iteration update direction and motivates explicitly correcting this effect. Finally, we introduce \textbf{SGMD}, which instantiates this perspective with dual potentials (NR/RC) and a lightweight two-step update to achieve stable training with low fake-score update overhead.
\subsection{Teacher Stop-Gradient Fisher Divergence}

A direct approach is to apply standard score matching and backpropagate teacher input gradients through $x_t$.
However, we empirically observe that training becomes extremely unstable once teacher input gradients are enabled.
We attribute this to the fact that, during distillation, $x_t$ is often induced by generated samples; teacher input gradients on out-of-distribution (OOD) states can be unreliable and then amplified by backpropagation.
We therefore adopt the teacher stop-gradient Fisher objective as a stable alternative:
\begin{equation} \label{eq:oneside}
\begin{aligned}
    \mathcal{L}_{\mathrm{Fisher}}(\theta,\psi) &:= \frac{1}{2} \left\| s_{\text{fake}}(x_t,t) - s_{\text{real}}(\operatorname{sg}[x_t],t) \right\|^2 \\
    &= \frac{1}{2}c(t)\left\|\Delta_t\right\|^2.
\end{aligned}
\end{equation}
Under ideal conditions, Eq.~\eqref{eq:oneside} induces the same descent direction as reverse-KL-style distribution matching in DMD (Proposition~\ref{theo:fisher_kl}).

\begin{proposition} \label{theo:fisher_kl}
Assume ideal tracking: the learned fake score and teacher score equal the ground-truth scores, \textit{i.e.},
$$
s_{\text{fake}}(x_t,t)=s_{q_{\theta,t}}(x_t),\qquad s_{\text{real}}(x_t,t)=s_{p_t}(x_t).
$$
Then the effective outer-loop descent direction induced by one-sided Fisher is directionally consistent with that induced by reverse-KL-style distribution matching (i.e., $\mathrm{KL}(q\Vert p)$), and both align with the score difference $s_{\text{fake}}-s_{\text{real}}$.
\end{proposition}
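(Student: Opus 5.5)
The plan is to compute both outer-loop gradients explicitly under the ideal-tracking assumption and to show that each reduces to a positive-weighted pairing of the score residual $\delta_t(x_t):=s_{q_{\theta,t}}(x_t)-s_{p_t}(x_t)$ with a derivative of the generated sample. I will split $\nabla_\theta \mathcal{L}_{\mathrm{Fisher}}$ into its two contributions, exactly as SIM does in Eq.~\eqref{implicit}. The \emph{explicit} contribution comes from $\theta$ entering through $x_t=\alpha_t G_\theta(z)+\sigma_t\epsilon$ inside $s_{\text{fake}}$. The \emph{implicit} contribution comes from the dependence of the tracked fake score on $\theta$, i.e.\ from $\psi^*(\theta)$. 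The teacher term is stop-gradiented, so it contributes nothing.

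\textbf{Step 1: the KL side.} Starting from Eq.~\eqref{eq:kl-grad} at noise level $t$, I will write
\[
\nabla_\theta \mathrm{KL}(q_t\Vert p_t)=\mathbb{E}\big[\alpha_t\,\delta_t(x_t)^\top \partial_\theta G_\theta(z)\big].
\]
This is the standard identity: the term $\mathbb{E}_{q}[\partial_\theta \log q_\theta]$ vanishes because $q_\theta$ integrates to one. So the KL descent direction is $-\alpha_t\,\delta_t^\top \partial_\theta G$.

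\textbf{Step 2: the Fisher side.} Under ideal tracking, $\mathcal{L}_{\mathrm{Fisher}}$ evaluated along the path is $\tfrac12\mathbb{E}_{q_t}\|\delta_t\|^2$ with $p$-score frozen. I will use the SIM identity, which the paper takes as given. It states that the implicit term equals the gradient of $\mathcal{L}^{(2)}=\langle \operatorname{sg}[\delta_t],\, s_t(x_t|x_0)-\operatorname{sg}[s_{\text{fake}}]\rangle$. Since $\partial_\theta s_t(x_t|x_0)=\tfrac{\alpha_t}{\sigma_t^2}(\alpha_t-1)\partial_\theta G$ along the reparametrized path, this implicit gradient is a scalar multiple of $\delta_t^\top\partial_\theta G$. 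The explicit term, by the chain rule, is $\alpha_t\,\delta_t^\top \nabla_x s_{q_t}\,\partial_\theta G$.

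To combine the two, I will use the Stein/score-matching integration-by-parts identity $\mathbb{E}_{q_t}[\langle \delta_t,\nabla_x s_{q_t}\,v\rangle]$ together with Tweedie's formula. This should show that, to first order around ideal tracking, the Jacobian factor acts on the residual as a positive semidefinite preconditioner. The Fisher gradient therefore takes the form $\mathbb{E}[\delta_t^\top M_t\,\partial_\theta G]$ with $M_t\succeq 0$. The conclusion will then be that both descent directions lie in the half-space $\langle -\delta_t,\partial_\theta G\,\cdot\rangle>0$. So they are directionally consistent, having nonnegative inner product, and are both driven by $s_{\text{fake}}-s_{\text{real}}$.

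\textbf{Main obstacle.} The hardest step is controlling the explicit Jacobian term $\nabla_x s_{q_t}$, which need not be PSD: $\nabla_x s$ is negative of a covariance-type matrix plus $-I/\sigma_t^2$. My first attempt will be to use Tweedie's formula, $\nabla_x s_{q_t}=-\tfrac{1}{\sigma_t^2}\big(I-\tfrac{\alpha_t^2}{\sigma_t^2}\mathrm{Cov}[x_0\mid x_t]\big)$. I will then argue that the implicit term from $\psi^*(\theta)$ cancels exactly the covariance part. This would leave a scalar multiple $c(t)$ times $\delta_t^\top\partial_\theta G$, which matches the $c(t)$ weighting in Eq.~\eqref{eq:oneside}. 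If the exact cancellation fails, I will fall back to stating consistency as a positive inner product in expectation.
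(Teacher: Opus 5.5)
There is a genuine gap in Step~2: it analyzes the wrong update, and the cancellation it relies on does not occur.

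\textbf{Wrong update and the failed cancellation.} The one-sided Fisher outer step holds $\psi$ fixed (see the Fisher paragraph of Sec.~\ref{sec:grad_analysis} and Algorithm~\ref{alg:sgmd}). The $\psi^*(\theta)$ implicit term is exactly what distinguishes SIM from Fisher in the paper, so it should not enter at all.

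Even on its own terms, your computation of that term fails. Along the reparametrized path $s_t(x_t|x_0)=-\epsilon/\sigma_t$, so $\partial_\theta s_t(x_t|x_0)=0$, not $\tfrac{\alpha_t(\alpha_t-1)}{\sigma_t^2}\partial_\theta G$. With both $\delta_t$ and $s_{\text{fake}}$ under $\operatorname{sg}$, your $\mathcal{L}^{(2)}$ therefore has identically zero gradient. In SIM the $\theta$-dependence enters by evaluating the parameter-frozen networks at $x_t(\theta)$.

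Done correctly, the score-projection identity gives the implicit term as $\mathbb{E}[(s_t(x_t|x_0)-s_{q_t})^\top\nabla_x\delta_t\,\partial_\theta x_t]-\mathbb{E}[\delta_t^\top\nabla_x s_{q_t}\,\partial_\theta x_t]$. This cancels the \emph{entire} explicit term, not just its covariance part. A quick check: take $x_0=m+wz$, $p_0=\mathcal{N}(0,1)$, $\alpha_t=1$. The exact fake score along the path is $-(wz+\sigma_t\epsilon)/(w^2+\sigma_t^2)$, which is independent of $m$. So your total gradient has zero $m$-component, while $\partial_m\mathrm{KL}(q_t\Vert p_t)=m/(1+\sigma_t^2)\neq 0$. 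No identity of the form $c\,\mathbb{E}[\delta_t^\top\partial_\theta G]$ can hold.

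\textbf{Wrong sign and the correct route.} Your starting point also carries the wrong sign. In score form the explicit term is $\alpha_t\delta_t^\top\nabla_x s_{q_t}\,\partial_\theta G$ with $\nabla_x s_{q_t}=-\sigma_t^{-2}I+\alpha_t^2\sigma_t^{-4}\mathrm{Cov}_q[x_0|x_t]$. Stripping the covariance part would leave $-\sigma_t^{-2}\nabla_\theta\mathrm{KL}$, i.e.\ KL \emph{ascent*}; in the Gaussian example the explicit term is exactly $-(w^2+\sigma_t^2)^{-1}\nabla_\theta\mathrm{KL}$. The cause is that, once the teacher is stop-gradiented, the score form and the $x$-prediction form in Eq.~\eqref{eq:oneside} agree in value but not in gradient. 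Differentiating the $-x_t/\sigma_t^2$ inside $s_{\text{fake}}$ produces an extra $-\delta_t/\sigma_t^2$ that the frozen teacher no longer cancels.

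The paper gives no standalone proof. The argument implicit in Sec.~\ref{sec:grad_analysis} and after Eq.~\eqref{eq:nr_effective_grad} is the direct one, in the $x$-prediction form the paper actually implements, with $\psi$ fixed. Then $\nabla_{x_{\text{fake}}}\mathcal{L}_{\mathrm{Fisher}}=c(t)\Delta_t=\tfrac{\alpha_t}{\sigma_t^2}(s_{\text{fake}}-s_{\text{real}})$, a positive multiple of the per-sample signal in Eq.~\eqref{eq:kl-grad}. Pulled back to $x_0$ this is $\tfrac{\alpha_t}{\sigma_t^2}(\alpha_t J_\mu)^\top\delta_t$. Ideal tracking means $\mu_\psi=\mathbb{E}_q[x_0|x_t]$, so second-order Tweedie gives $\alpha_t J_\mu=\tfrac{\alpha_t^2}{\sigma_t^2}\mathrm{Cov}_q[x_0|x_t]\succeq 0$. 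Hence the inner product with the DMD direction $\alpha_t\delta_t$ is nonnegative for every sample. In the Gaussian example the two are exactly proportional, with factor $w^2/(\sigma_t^2(w^2+\sigma_t^2))$. No implicit term or Stein identity is needed.

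Finally, state the consistency at this per-sample level. A pointwise PSD preconditioner does not make the two \emph{expected} parameter gradients have a nonnegative inner product, so your fallback would not follow either.
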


Under this teacher stop-gradient setting, we still encounter the following training difficulties:

\begin{itemize}[leftmargin=*, nosep]
    \item Using the explicit loss $\mathcal{L}^{(1)}(\theta)$ alone: the fake score fails to continuously track the rapidly evolving generator, and the distillation quality tends to degrade after some training.
    \item Using $\mathcal{L}^{(1)}(\theta)$ together with $\mathcal{L}^{(2)}(\theta)$ as in SIM: although tracking can be improved, the generator often fails to effectively converge for large-scale models, leading to blurry and less sharp outputs.
\end{itemize}

\subsection{Fake-Score Perspective}
\label{sec:fake_score_persp}
Most prior DMD-style methods adopt a generator perspective:
the fake score is treated as an auxiliary tracker that is repeatedly fit to the current generator-induced distribution, and the generator is updated assuming the tracker is sufficiently up-to-date.
In contrast, SGMD adopts a fake-score perspective:
we treat the fake score as the primary optimization target that should move toward the teacher score.
Crucially, the fake score is a coupled object that depends on both networks, which we write as $s_{\text{fake}}(\cdot,t;\psi,\theta)$:
$\psi$ parameterizes the fake-score model, while $\theta$ determines the generator-induced distribution on which the score is defined.
Meanwhile, the generator acts as a tracker that maintains score-consistency, ensuring compatibility between the current generator-induced distribution and the learned fake score.
We provide a formal justification of this perspective in Appendix~\ref{app:fake_score_persp_rationale}.

This perspective suggests analyzing objectives through the net update direction of one coupled iteration, which we do next in Sec.~\ref{sec:grad_analysis}.

\subsection{Gradient Analysis}
\label{sec:grad_analysis}

\begin{figure}[t]
    \centering
    \begin{subfigure}{\linewidth}
        \centering
        \includegraphics[width=\linewidth]{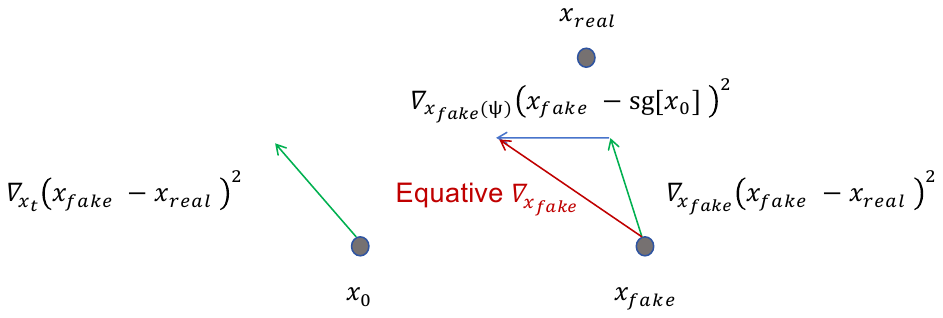}
        \vspace{-0.15in}
        \caption{Teacher stop-gradient Fisher: the distribution-matching term can be bent in the coupled system (biased net direction on $x_{\text{fake}}$).}
        \label{fig:grad_fisher}
    \end{subfigure}
    \vspace{2mm}
    \begin{subfigure}{\linewidth}
        \centering
        \includegraphics[width=\linewidth]{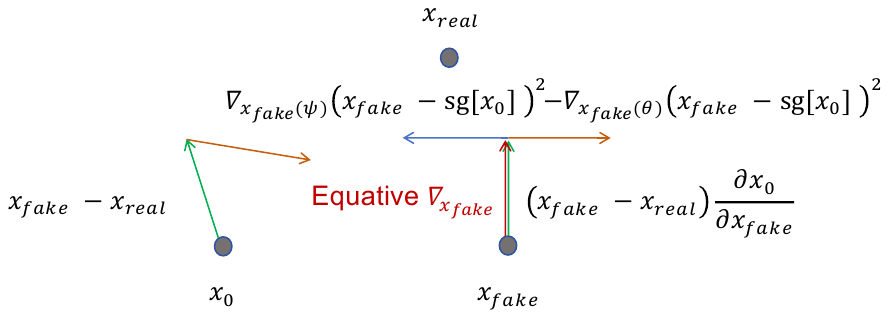}
        \vspace{-0.15in}
        \caption{SIM: the induced net direction can become conservative due to tracking-induced terms.}
        \label{fig:grad_sim}
    \end{subfigure}
    \vspace{2mm}
    \begin{subfigure}{\linewidth}
        \centering
        \includegraphics[width=\linewidth]{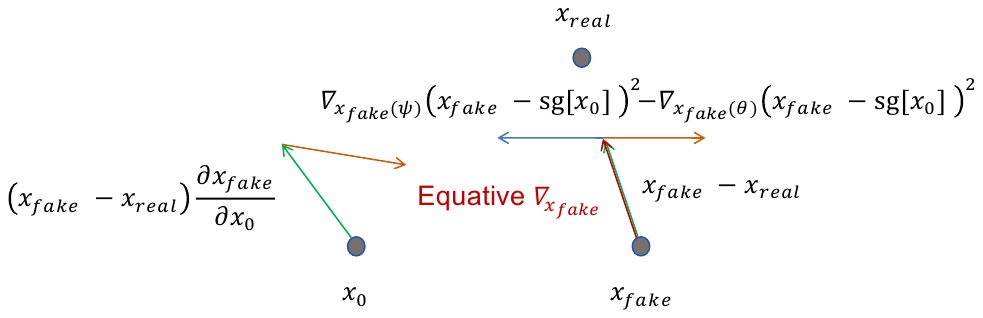}
        \vspace{-0.15in}
        \caption{SGMD (ours): NR and RC dual potentials decouple correction vs.\ contraction to recover the desired direction.}
        \label{fig:grad_sgmd}
    \end{subfigure}
    \vspace{-0.2in}
    \caption{Gradient behaviors under the fake-score perspective. Arrows show the net one-iteration direction on $x_{\text{fake}}$ induced by coupled $(\theta,\psi)$ updates: Fisher can be bent by coupling-induced tracking lag; SIM may become conservative; SGMD restores the desired direction via NR and RC.}
    \label{fig:grad_analysis_fastdmd}
    \vspace{-0.3in}
\end{figure}

We analyze objectives through the fake-score perspective and focus on the effective update direction on the generator output $x_0$ induced by one iteration of the coupled optimization.
Let $\mu_{\text{fake}}(x_t,t)\equiv \mu_{\psi,\operatorname{sg}[\theta]}(x_t,t)$ be the fake-score $x_0$-prediction and $\mu_{\text{real}}(x_t,t)\equiv \mu_{\text{base}}(x_t,t)$ the paired target prediction; $x_t$ is obtained by Eq.~\eqref{eq:forward_noising}.
Fig.~\ref{fig:grad_analysis_fastdmd}\subref{fig:grad_fisher}--\subref{fig:grad_sgmd} visualize the resulting behaviors for Fisher, SIM, and SGMD.
Empirically, high-quality few-step distillation benefits from the alignment condition
\begin{equation}
\nabla_{x_{\text{fake}}}\mathcal{J}(\theta,\psi)\ \propto\ \mu_{\text{fake}}(x_t,t)-\mu_{\text{real}}(x_t,t),
\label{eq:grad_align}
\end{equation}
where $\mathcal{J}(\theta,\psi)$ denotes the objective induced by one coupled iteration.
Importantly, we do not require a closed-form expression of $\mathcal{J}$; throughout this section it serves as a shorthand for the composition of the $\theta$-update and the $\psi$-update under teacher stop-gradient control, and we only analyze its induced net direction on $x_{\text{fake}}$.
Systematic deviation from Eq.~\eqref{eq:grad_align} leads to conservative updates and less vivid motion and details.

\subsubsection{Fisher score matching}
We first revisit Fisher-style score matching under the coupled training dynamics.
In distillation, we approximate the two scores by $s_{\text{fake}}(x_t,t)$ and $s_{\text{real}}(x_t,t)$ and update the generator with a teacher-aligned Fisher objective (Eq.~\eqref{eq:oneside}) while treating $\psi$ as fixed.
This yields a clean outer direction on $x_0$.
However, the iteration does not end there: after updating $\theta$, we must update $\psi$ to re-fit the fake score to the new generator-induced distribution.
This tracking step changes the score field and therefore bends the net one-iteration effect on $x_0$, making the overall update deviate from the original Fisher direction, as shown in Fig.~\ref{fig:grad_analysis_fastdmd}\subref{fig:grad_fisher}.

\subsubsection{SIM}
For brevity, we write $\mu_{\text{fake}}(x_t,t)$ as $x_{\text{fake}}$ and $\mu_{\text{real}}(x_t,t)$ as $x_{\text{real}}$ in the following.
The SIM objective combines an explicit Fisher term $\mathcal{L}^{(1)}(\theta)$ and an implicit term $\mathcal{L}^{(2)}(\theta)$.
In the $x$-prediction form, their sum can be rewritten as (up to terms independent of $\theta$):
\begin{equation}
\begin{aligned}
\mathcal{L}_{\mathrm{SIM}}(\theta)
&=\mathcal{L}^{(1)}(\theta)+\mathcal{L}^{(2)}(\theta) \\
&= c(t)\Big(
\tfrac{1}{2}\|x_{\text{real}}\|^{2}-\tfrac{1}{2}\|x_{\text{fake}}\|^{2}
\\ &\qquad\quad + \langle x_0, x_{\text{fake}}\rangle-\langle x_0, x_{\text{real}}\rangle
\Big),
\label{eq:sim_rewrite}
\end{aligned}
\end{equation}
which yields the following effective direction on $x_{\text{fake}}$:
\begin{equation}
\begin{aligned}
\nabla_{x_{\text{fake}}}\mathcal{L}_{\mathrm{SIM}}
&= c(t)\Big(
(x_0-x_{\text{fake}})
 +\Big(\tfrac{d x_{0}}{d x_{\text{fake}}}\Big)^{\!*}(x_{\text{fake}}-x_{\text{real}})
\Big).
\end{aligned}
\label{eq:sim_grad}
\end{equation}
Here $(\cdot)^{*}$ denotes the adjoint (VJP) operator under the Frobenius inner product, and $d x_0/d x_{\text{fake}}$ denotes a $\theta$-induced directional map.
Eq.~\eqref{eq:sim_grad} decomposes SIM into two behaviors:
the first term encourages tracking and is opposite to the regression gradient used to train the fake score;
the second term aligns with reverse-KL-style distribution matching (i.e., $\mathrm{KL}(q\Vert p)$).
Thus, SIM may improve tracking. However, the tracking-induced term biases the net direction on $x_{\text{fake}}$, especially when tracking lag persists (Fig.~\ref{fig:grad_analysis_fastdmd}\subref{fig:grad_sim}).

These observations suggest two requirements: enforcing the desired outer direction in Eq.~\eqref{eq:grad_align}, and maintaining tracking so the fake score remains compatible with the evolving generator. We next show how SGMD instantiates them with a simple two-step update.

\subsection{SGMD}

\begin{algorithm}[t]
\caption{SGMD training}
\label{alg:sgmd}
\begin{algorithmic}[1]
\STATE \textbf{Input:} generator $G_\theta$, fake score model $\mu_\psi$, real score model $\mu_{\text{base}}$, coefficient $\lambda$
\FOR{each training iteration}
    \STATE Sample a minibatch; draw noise level $t$ and $\epsilon\sim\mathcal{N}(0,I)$
    \STATE Generate $x_0 \leftarrow G_\theta(\cdot)$ and form $x_t \leftarrow \alpha_t x_0 + \sigma_t\epsilon$ (Eq.~\eqref{eq:forward_noising})
    \STATE Compute $x_0$-predictions $x_{\text{fake}}\leftarrow \mu_{\psi}(x_t,t)$ and $x_{\text{real}}\leftarrow \mu_{\text{base}}(\operatorname{sg}[x_t],t)$; set $\Delta_t \leftarrow x_{\text{fake}}-x_{\text{real}}$
    \STATE Compute Fisher loss: $\mathcal{L}_{\mathrm{Fisher}}(\theta)\leftarrow \tfrac12\,c(t)\left\|\Delta_t\right\|^2$ with $c(t)=\alpha_t^2/\sigma_t^4$ (Eq.~\eqref{eq:oneside})
    \STATE Compute Residual: $r \leftarrow \operatorname{sg}[x_0] - x_{\text{fake}}$ (Eq.~\eqref{eq:residual_r})
    \STATE Compute Dual potentials: $\mathcal{L}_{\mathrm{NR}}(\theta)\leftarrow -\tfrac12\|r\|^2$, \\$\mathcal{L}_{\mathrm{RC}}(\psi)\leftarrow +\tfrac12\|r\|^2$ (Eq.~\eqref{eq:dual_potentials})
    \STATE \textbf{Update Generator} with $\psi$ detached: \\ $\theta \leftarrow \theta - \eta_\theta\nabla_\theta\!\left(\mathcal{L}_{\mathrm{Fisher}}(\theta)+\lambda\mathcal{L}_{\mathrm{NR}}(\theta)\right)$ 
    \STATE \textbf{Update Fake-score} with $\theta$ detached: \\ $\psi \leftarrow \psi - \eta_\psi\nabla_\psi\,(\lambda\mathcal{L}_{\mathrm{RC}}(\psi))$
\ENDFOR
\end{algorithmic}
\end{algorithm}

SGMD is a simple two-step update scheme that enforces the desired outer direction in Eq.~\eqref{eq:grad_align} when updating the generator, while maintaining tracking so the fake score stays compatible with the evolving generator.
Concretely, we decompose the tracking problem into two complementary roles: an outer-loop direction correction applied to the generator update, and an inner-loop residual contraction applied to the fake score update.

\subsubsection{Dual potentials}
Let $x_0$ denote the generator sample and let $x_t$ be its noisy state at noise level $t$ obtained by Eq.~\eqref{eq:forward_noising}.
We use $\operatorname{sg}[\cdot]$ to denote stop-gradient.
Define the tracking residual
\begin{equation}
r(x_0,x_t)\ :=\ \operatorname{sg}[x_0]-x_{\text{fake}}.
\label{eq:residual_r}
\end{equation}
Intuitively, $r$ measures the tracking gap between the generator output and the fake score's $x_0$-prediction.

We introduce a pair of dual potentials that induce opposite gradients w.r.t.\ $x_{\text{fake}}$ (NR for negative-residual, RC for residual-contraction):
\begin{equation}
\begin{aligned}
\mathcal{L}_{\mathrm{NR}}(\theta)\ :=\ -\tfrac12\|r(x_0,x_t)\|^2, \\
\mathcal{L}_{\mathrm{RC}}(\psi)\ :=\ +\tfrac12\|r(x_0,x_t)\|^2,
\label{eq:dual_potentials}
\end{aligned}
\end{equation}

We use $\mathcal{L}_{\mathrm{NR}}$ in the generator update and detach $\psi$.
We use $\mathcal{L}_{\mathrm{RC}}$ in the fake-score update and detach $\theta$.

Under the stop-gradient convention in Eq.~\eqref{eq:residual_r}, the other block is treated as constant, hence the two potentials induce opposite gradients:
\begin{equation}
\nabla_{x_{\text{fake}}}\mathcal{L}_{\mathrm{NR}}(\theta)=r,\qquad
\nabla_{x_{\text{fake}}}\mathcal{L}_{\mathrm{RC}}(\psi)=-r.
\label{eq:dual_grads}
\end{equation}

Eq.~\eqref{eq:dual_grads} describes the dual effects in the $x_{\text{fake}}$-space. To connect to the generator behavior, we map them to $x_0$ through the dependence $x_{\text{fake}}=\mu_\psi(x_t,t)$ with $x_t=\alpha_t x_0+\sigma_t\epsilon$.
In particular, although $x_0$ is stop-gradient in $r=\operatorname{sg}[x_0]-x_{\text{fake}}$, the generator still receives a non-zero gradient via the path $x_0\!\rightarrow\! x_t\!\rightarrow\! x_{\text{fake}}$, yielding:
\begin{equation}
\begin{aligned}
\nabla_{x_0}\mathcal{L}_{\mathrm{NR}}
&=\Big(\tfrac{\partial x_{\text{fake}}}{\partial x_0}\Big)^{\!*}(x_0-x_{\text{fake}}) \\
&=(\alpha_t J_\mu(x_t,t))^{*} (x_0-x_{\text{fake}}),
\end{aligned}
\label{eq:nr_effective_grad}
\end{equation}
where $J_\mu=\partial \mu_\psi/\partial x_t$.
Under a tracking regime where $\alpha_t J_\mu(x_t,t)\approx P_t\succeq 0$ (approximately identity in the ideal case), the induced update direction on $x_0$ aligns with $x_{\text{fake}}-x_0$, pulling the generator output toward the fake-score prediction and keeping the system close to score-consistency.

\subsubsection{Overall objective and two-step update}
We retain the stable teacher stop-gradient Fisher objective in Eq.~\eqref{eq:oneside} as the distribution-matching objective and augment it with the outer correction:
\begin{equation}
\min_{\theta}\ \mathcal{L}_{\mathrm{Fisher}}(\theta)\ +\ \lambda\,\mathcal{L}_{\mathrm{NR}}(\theta),
\label{eq:sgmd_outer}
\end{equation}
while updating the fake score by residual contraction
\begin{equation}
\begin{array}{ll}
     \min_{\psi}\ \lambda\mathcal{L}_{\mathrm{RC}}(\psi),
\end{array}
\label{eq:sgmd_inner}
\end{equation}
where $\lambda>0$ balances distribution matching and tracking correction.
In practice, we implement Eqs.~\eqref{eq:sgmd_outer}--\eqref{eq:sgmd_inner} with a lightweight two-step (two-backward) procedure per iteration:
first update $\theta$ using $\mathcal{L}_{\mathrm{Fisher}}+\lambda\mathcal{L}_{\mathrm{NR}}$ while treating the fake score as fixed,
then update $\psi$ using $\lambda\mathcal{L}_{\mathrm{RC}}$ while treating the generator as fixed.
This yields a cooperative one-step bilevel update that explicitly controls tracking lag without computing second-order implicit-gradient terms, as visualized in Fig.~\ref{fig:grad_analysis_fastdmd}\subref{fig:grad_sgmd}.

It is worth noting that the weight $\lambda$ should be moderate: too small $\lambda$ yields insufficient tracking correction, while too large $\lambda$ amplifies one-step lag (staleness) and makes the $x_{\text{fake}}$ update direction less accurate (Appendix~\ref{app:lambda_analysis}).
Empirically, we find $\lambda=0.1$ gives the best trade-off in our experiments.

Algorithm~\ref{alg:sgmd} summarizes the SGMD training procedure. We additionally provide PyTorch-style pseudocode in Appendix~\ref{app:pytorch_pseudocode} to facilitate implementation.

\subsubsection{Comparison with SIM}
From the fake-score perspective, SGMD's asymptotically unbiased direction on $x_{\text{fake}}$ comes from two ingredients: (i) an unbiased generator update given by the teacher stop-gradient Fisher objective $\mathcal{L}_{\mathrm{Fisher}}$ (Eq.~\eqref{eq:oneside}), and (ii) a dual pair of updates on the generator and the fake score induced by $\mathcal{L}_{\mathrm{NR}}$ and $\mathcal{L}_{\mathrm{RC}}$ (Eq.~\eqref{eq:dual_potentials}).
In contrast, although SIM also implicitly induces a dual structure, its generator update is not equivalent to the unbiased Fisher outer direction.
Moreover, in SIM the relative strength between the two behaviors is fixed and cannot be tuned, whereas SGMD introduces an explicit weight $\lambda$ to control the tracking strength in Eqs.~\eqref{eq:sgmd_outer}--\eqref{eq:sgmd_inner}.

\section{Experiments}

\begin{table*}[t]
\centering
\caption{Results on VBench-T2V under 4-step distillation. We report NFE as the total number of model forward evaluations. For the base model, we use classifier-free guidance (thus NFE is doubled). Abbreviations: NFE, number of function evaluations; Fake-R, fake-score updates per iteration; FVD, Fr\'echet Video Distance; OptFlow, optical-flow-based motion intensity; DynDeg, dynamic degree.}
\setlength{\tabcolsep}{5pt}
\renewcommand{\arraystretch}{1.1}
\begin{tabular}{l c c c c ccc}
\toprule
\multirow{2}{*}{Method} & \multirow{2}{*}{NFE$\downarrow$} & \multirow{2}{*}{Fake-R$\downarrow$} & \multirow{2}{*}{FVD$\downarrow$} & \multirow{2}{*}{OptFlow$\uparrow$} & \multicolumn{3}{c}{VBench-T2V} \\
\cmidrule(lr){6-8}
 &  &  &  &  & Quality$\uparrow$ & Semantic$\uparrow$ & DynDeg$\uparrow$ \\
\midrule
Wan2.1-T2V-14B (base) & 100 & -- & 0.0 & 9.41 & 86.67 & 84.44 & 94.26 \\
\midrule
DMD2 \cite{dmd2} & 4 & 5 & 115.1 & 4.51 & \textbf{85.05} & \textbf{77.46} & 80.56 \\
TSG-Fisher & 4 & 5 & 126.7 & 8.18 & 82.98 & 71.50 & \textbf{94.25} \\
TSG-SIM \cite{sim} & 4 & 1 & 193.0 & 3.27 & 82.68 & 73.21 & 59.72 \\
SGMD (ours) & 4 & 1 & \textbf{100.3} & \textbf{9.29} & 84.77 & 75.64 & 93.06 \\
\bottomrule
\end{tabular}
\label{tab:t2v_main_vbench}
\vspace{-0.1in}
\end{table*}

To evaluate the efficacy of SGMD, we conduct experiments on the text-to-video (T2V) task. The SOTA base model Wan2.1-T2V-14B \cite{wan_tech} is employed as the teacher.
We follow a standard evaluation protocol for large-scale video diffusion distillation.

\subsection{Experimental setup}
All compared distillation methods (DMD2~\cite{dmd2}, TSG-Fisher, TSG-SIM~\cite{sim}, and SGMD) are trained using prompts only, without requiring paired ground-truth videos. We use a non-public prompt dataset (about 200K prompts) for training.

\textbf{Evaluation.} Large-scale video generation models (e.g., Wan2.1-T2V-14B~\cite{wan_tech}) exhibit strong motion dynamics and camera control, which is a key advantage over smaller models and a major factor behind their superior human preference. Since VBench \cite{vbench} alone may not be sufficiently sensitive to motion intensity, we use VBench as the standard benchmark (quality, semantic, and \textit{dynamic degree}); we also compute the aggregated VBench total score for analysis. We additionally report an optical-flow-based motion intensity metric~\cite{unifying}. This optical-flow metric is also adopted in Phased-DMD \cite{phaseddmd} for evaluating motion strength. Specifically, we compute per-frame optical-flow using UniMatch~\cite{unifying} and report the mean absolute flow magnitude averaged over frames and pixels. We also report FVD \cite{fvd} following the standard I3D-feature protocol. Unless otherwise specified, all distilled methods are evaluated using checkpoints after 300 training iterations, under the same inference and evaluation settings. We generate one video per evaluation instance at resolution \(480\times 832\) and duration 81 frames.

\begin{figure*}[!ht]
    \begin{minipage}[b]{\linewidth}
       \begin{minipage}[b]{\linewidth}
            \centering
            \begin{subfigure}[tp!]{\textwidth}
            \centering
            \subcaption{SGMD (\textit{Ours})}
            \includegraphics[width=\linewidth]{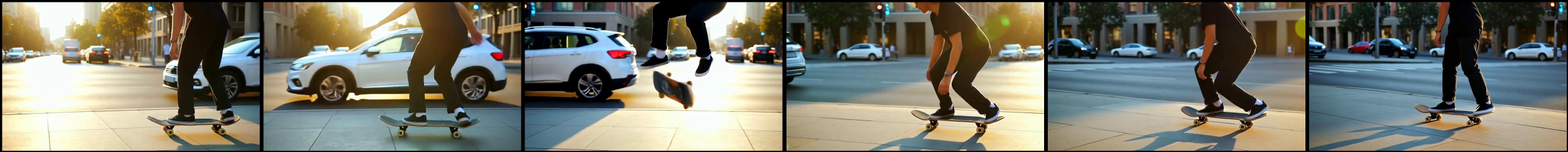}
            \end{subfigure}
       \end{minipage} 
       \begin{minipage}[b]{\linewidth}
            \centering
            \begin{subfigure}[tp!]{\textwidth}
            \centering
            \includegraphics[width=\linewidth]{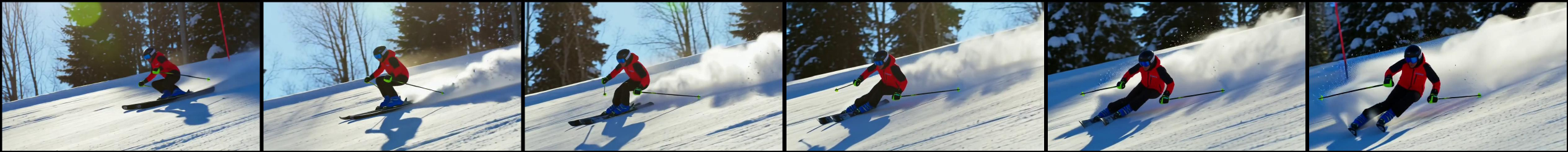}
            \end{subfigure}
       \end{minipage}
       \begin{minipage}[b]{\linewidth}
            \centering
            \begin{subfigure}[tp!]{\textwidth}
            \centering
            \includegraphics[width=\linewidth]{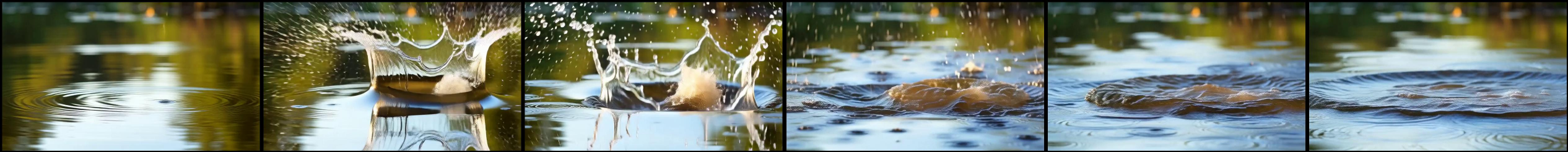}
            \end{subfigure}
       \end{minipage}
       \begin{minipage}[b]{\linewidth}
            \centering
            \begin{subfigure}[tp!]{\textwidth}
            \centering
            \includegraphics[width=\linewidth]{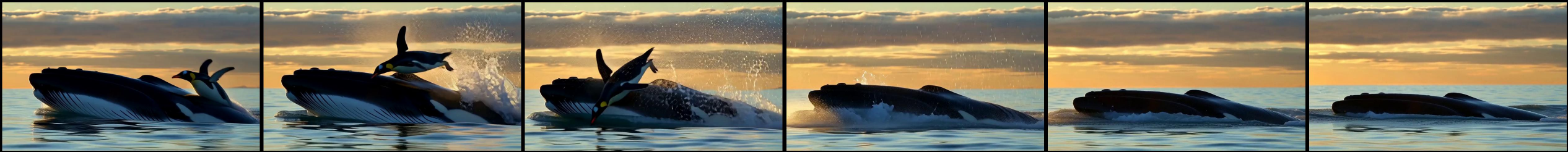}
            \end{subfigure}
       \end{minipage}

       \begin{minipage}[b]{\linewidth}
            \centering
            \begin{subfigure}[tp!]{\textwidth}
            \centering
            \subcaption{DMD2~\cite{dmd2}}
            \includegraphics[width=\linewidth]{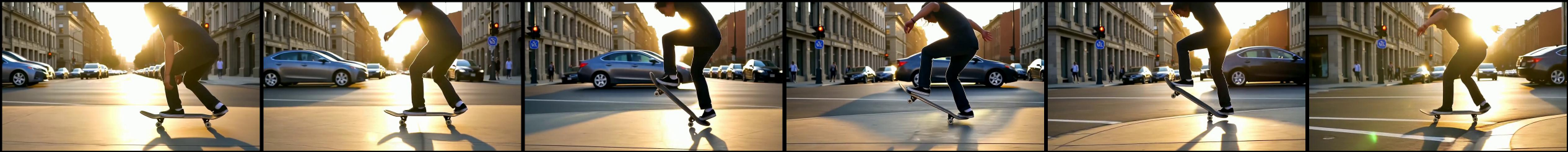}
            \end{subfigure}
       \end{minipage} 
       \begin{minipage}[b]{\linewidth}
            \centering
            \begin{subfigure}[tp!]{\textwidth}
            \centering
            \includegraphics[width=\linewidth]{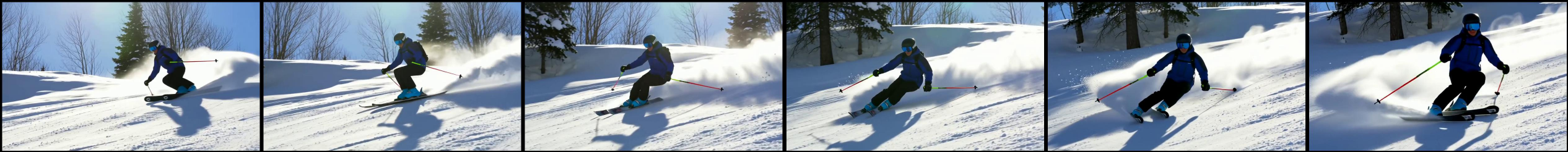}
            \end{subfigure}
       \end{minipage}
       \begin{minipage}[b]{\linewidth}
            \centering
            \begin{subfigure}[tp!]{\textwidth}
            \centering
            \includegraphics[width=\linewidth]{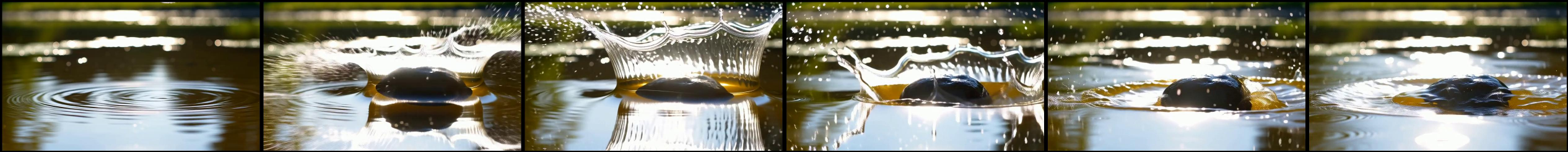}
            \end{subfigure}
       \end{minipage}
       \begin{minipage}[b]{\linewidth}
            \centering
            \begin{subfigure}[tp!]{\textwidth}
            \centering
            \includegraphics[width=\linewidth]{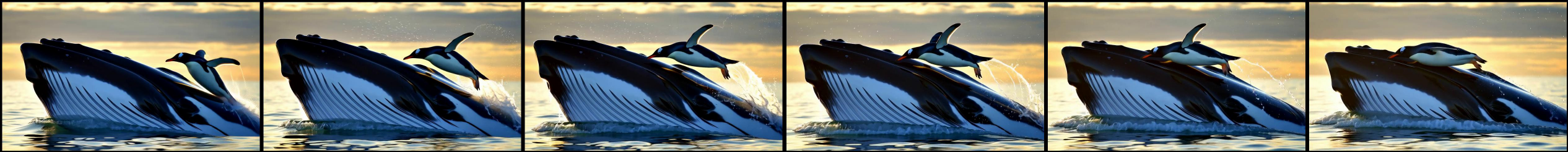}
            \end{subfigure}
       \end{minipage}
   \end{minipage}
   \caption{\textbf{Qualitative comparison (SGMD vs.\ DMD2).} Under comparable perceptual sharpness and visual quality, SGMD shows clearer temporal progression and larger motion changes across frames while maintaining good temporal consistency. For each 81-frame video, we show frames \(\{0,16,32,48,64,80\}\) as a preview. \label{fig:vis}}
\end{figure*}

\paragraph{Implementation.}
We conduct experiments on 32 Nvidia-H100 GPUs, employing PyTorch FSDP~\cite{zhao2023pytorchfsdpexperiencesscaling} and gradient checkpointing to reduce memory consumption. Context parallelism is applied for T2V distillation.
The following settings are used consistently across all experiments: a batch size of 32; a fake diffusion model and a few-step generator initialized from the base model, with full-parameter training under a learning rate of \(1\times 10^{-6}\); AdamW optimizer~\cite{loshchilov2019decoupledweightdecayregularization} for both the fake diffusion and the generator, with hyperparameter \(\beta_1=0\) and \(\beta_2=0.999\). Euler solver is used in backward simulation due to its simplicity.
All distillation experiments in this paper use 4-step sampling with timesteps \(\{1000, 960, 889, 727\}\). For 4-step distillation, we adopt a self-forcing-style \cite{selfforcing} stochastic gradient truncation strategy to compute training gradients.

\paragraph{Baselines.}
We compare SGMD against three distilled baselines (DMD2, TSG-Fisher, and TSG-SIM), and additionally report the teacher base model as a reference.
\textbf{Base model} refers to the original Wan2.1-T2V-14B sampler using 50 inference steps with classifier-free guidance (CFG)~\cite{ho2022classifierfreediffusionguidance}. 
\textbf{DMD2} refers to DMD2 with a reverse-KL distribution matching objective using a fake-score tracker; we perform 5 fake-score updates per iteration and do not use any GAN loss \cite{goodfellow2014generativeadversarialnetworks} for a controlled comparison. 
\textbf{TSG-Fisher} refers to optimizing the teacher stop-gradient Fisher objective as a distribution-matching baseline.
\textbf{TSG-SIM} includes Score Implicit Matching under teacher stop-gradient control.
We also include SGMD ablations in Sec.~\ref{sec:ablation}.

\begin{table}[t]
\centering
\small
\caption{Ablations on \(\lambda\) in SGMD.}
\vspace{-0.1in}
\label{tab:ablation}
\begin{tabular}{lcccc}
\toprule
Setting & Total$\uparrow$ & Quality$\uparrow$ & Semantic$\uparrow$ & DynDeg$\uparrow$ \\
\midrule
$\lambda=0.05$ & 81.92 & 83.68 & 74.90 & 93.55 \\
$\lambda=0.1$ & \textbf{82.95} & \textbf{84.77} & \textbf{75.64} & 93.06 \\
$\lambda=0.2$ & 82.01 & 84.06 & 73.81 & \textbf{94.23}\\
$\lambda=0.5$ & 79.54 & 81.49 & 71.75 & 76.52 \\
\bottomrule
\end{tabular}
\vspace{-0.1in}
\end{table}

\subsection{Results}

\textbf{Quantitative results.} Table~\ref{tab:t2v_main_vbench} summarizes the VBench-T2V, FVD, and optical-flow results under 4-step distillation. DMD2 achieves the strongest VBench quality and semantic scores among the distilled models, but it exhibits substantially weaker motion. In contrast, SGMD attains markedly stronger motion intensity (OptFlow) and dynamics (DynDeg) while remaining competitive on VBench quality and semantic scores, and it achieves the best FVD among the distilled models. We also observe that TSG-SIM does not improve motion-related metrics, likely because its effective update direction is reverse-KL-style and thus similar to DMD2. Moreover, TSG-SIM implicitly imposes an overly strong dual term (roughly analogous to setting $\lambda=1$ in SGMD), which can make the fake-score update direction inaccurate; consequently, TSG-SIM converges much more slowly and remains visibly blurry even after 400 training iterations.

\begin{table}[t]
\centering
\small
\caption{Human evaluation between SGMD and DMD2.}
\label{tab:human_eval}
\setlength{\tabcolsep}{3.2pt}
\begin{tabular}{lccc}
\toprule
Aspect & SGMD (\%) & Tie (\%) & DMD2 (\%) \\
\midrule
Overall Preference & 65 & 22 & 13 \\
Motion Quality & 71 & 5 & 24 \\
Text-Video Alignment & 4 & 90 & 6 \\
Visual Quality & 14 & 74 & 12 \\
\bottomrule
\end{tabular}
\vspace{-0.1in}
\end{table}

\begin{table}[t]
\centering
\small
\caption{VideoAlign evaluation between DMD2 and SGMD.}
\label{tab:videoalign}
\begin{tabular}{lcc}
\toprule
Metric & DMD2 & SGMD \\
\midrule
Overall & 18.86 & \textbf{19.36} \\
Visual Quality & \textbf{8.47} & 8.19 \\
Motion Quality & 4.15 & \textbf{4.99} \\
Text Align & \textbf{6.24} & 6.19 \\
\bottomrule
\end{tabular}
\vspace{-0.2in}
\end{table}

\textbf{Human evaluation and VideoAlign.} To further validate the dynamics-quality trade-off from a perceptual perspective, we conduct a pairwise human evaluation between DMD2 and SGMD along four aspects: overall preference, motion quality, text-video alignment, and visual quality. As shown in Table~\ref{tab:human_eval}, SGMD is preferred in overall preference (65\%) and motion quality (71\%), while text-video alignment and visual quality are mostly judged as ties (90\% and 74\%, respectively). We additionally evaluate with VideoAlign~\cite{videoalign}, a human-feedback-trained reward model, as a scalable proxy for human preference. Table~\ref{tab:videoalign} shows a consistent trend: SGMD improves the overall score and motion quality, with small trade-offs in visual quality and text alignment. Together, these results indicate a favorable dynamics-quality trade-off: the motion gains are perceptually meaningful, while static visual quality and text alignment remain largely comparable.

\textbf{Mechanistic interpretation.} The motivating example in Fig.~\ref{fig:toy_kl_vs_fisher} helps interpret the trade-off in Table~\ref{tab:t2v_main_vbench}: reverse-KL objectives tend to strongly penalize allocating model mass in low-probability regions of the target distribution, leading to a more conservative update behavior. This offers an intuition for why reverse-KL-style distillation (e.g., DMD2) can preserve finer details yet suppress motion intensity, while Fisher-style objectives (e.g., Fisher/SGMD) provide smoother matching signals that more readily encourage stronger dynamics, potentially at the cost of a lower overall VBench score.

\textbf{Qualitative comparison.} Fig.~\ref{fig:vis} shows representative examples comparing SGMD and DMD2 under the same generation settings. We observe noticeably stronger motion dynamics from SGMD without an obvious loss of perceptual sharpness, while temporal consistency is well preserved. The prompts used in Fig.~\ref{fig:vis} are listed in Appendix~\ref{app:qual_prompts}.

\subsection{Analysis}
\textbf{Ablation: objective components.}
Table~\ref{tab:t2v_main_vbench} highlights two complementary effects. (i) Compared to DMD2, the teacher stop-gradient Fisher objective (TSG-Fisher) substantially improves motion dynamics (OptFlow/DynDeg), at the cost of lower VBench quality and semantic scores. (ii) Compared to TSG-Fisher, SGMD's dual potentials (NR/RC) enable stable training with a reduced fake-score update ratio (Fake-R: $5\!\rightarrow\!1$), improving VBench quality and semantic scores while largely preserving strong dynamics.

\textbf{Ablation: $\lambda$ sensitivity.} \label{sec:ablation}
We study the sensitivity of the tracking weight $\lambda$, which controls the trade-off between distribution matching and tracking correction in Eqs.~\eqref{eq:sgmd_outer}--\eqref{eq:sgmd_inner}. We sweep $\lambda\in\{0.05, 0.1, 0.2, 0.5\}$ while keeping the teacher, sampling steps, evaluation prompts, and optimization settings fixed. Table~\ref{tab:ablation} shows that a moderate $\lambda$ yields the best trade-off: $\lambda=0.1$ achieves the highest VBench total score, while $\lambda=0.2$ gives the best DynDeg but incurs a lower total score and mild blur. A smaller $\lambda=0.05$ preserves strong dynamics but tends to suffer from larger tracking lag, resulting in a lower total score. In contrast, a large $\lambda$ (e.g., $\lambda\ge 0.5$) often makes training difficult to converge, and the generated videos remain persistently blurry with poor details.

\textbf{Training efficiency.}
\label{sec:efficiency}
By reducing the number of fake-score updates per iteration from $5$ (DMD2-style baseline) to $1$ (SGMD), we achieve an approximate $\sim 3\times$ training speedup every iteration compared to the DMD2-style baseline under our settings. The detailed counting and wall-clock estimate are provided in Appendix~\ref{app:efficiency_calc}.

\section{Related Works}

Our work builds on Distribution Matching Distillation (DMD) \cite{dmd}, which couples a trainable generator with a student-side fake-score estimator under a pretrained teacher; related variants include DMD2 \cite{dmd2} and Phased-DMD \cite{phaseddmd}. Recent image-side work such as Flash-DMD~\cite{flashdmd} also studies efficient DMD-style distillation through timestep-aware training and joint reinforcement learning, which is complementary to our focus on fake-score tracking in video distillation. The closest related works are SIM \cite{sim} and SiD \cite{sid}, which address the two-timescale \emph{tracking lag} between the fake score and the evolving generator by leveraging the fake-score-induced \emph{implicit gradient}. SIM further characterizes the fixed relative weighting between the explicit and implicit terms, while SiD introduces an explicit reweighting coefficient.
SGMD differs in two aspects. First, we adopt a \emph{fake-score} perspective and analyze the net one-iteration update direction, which reveals a bias induced by SIM-style coupling. Second, we introduce a tunable pair of dual potentials (NR/RC) to decouple outer-loop correction from inner-loop contraction under the teacher stop-gradient Fisher objective. A more detailed comparison and derivations are provided in Appendix~\ref{app:sim_sid_sgmd}.

\section{Conclusion}
In this paper, we propose \textbf{Score Gradient Matching Distillation (SGMD)} for few-step video diffusion distillation. SGMD adopts a fake-score perspective: it directly improves the fake score towards the teacher under a stable teacher stop-gradient Fisher objective, while updating the generator as a tracker to maintain score-consistency. This is realized by a pair of dual potentials that decouple outer-loop correction (NR) from inner-loop contraction (RC), yielding a lightweight two-step update per iteration. Empirically, SGMD improves motion dynamics and temporal consistency under 4-step distillation, and reduces training overhead by lowering the fake-score update ratio from 5 to 1, resulting in a $\sim 3\times$ speedup under our settings. A human study confirms that SGMD is preferred in motion quality and overall preference, while visual quality and text alignment remain comparable. We believe SGMD provides a principled and practical direction for stabilizing and accelerating few-step distillation of large-scale video diffusion models.

\clearpage
\newpage

\section*{Acknowledgements}
This work was supported by the National Natural Science Foundation of China (Nos. 62525601, 62476018), and the Postdoctoral Fellowship Program of CPSF (No. BX20250487).

\section*{Impact Statement}
This work advances large-scale video generation by introducing a stable, few-step distillation framework. The societal implications align with typical considerations for generative ML: potential benefits in creative tools, simulation, and accessibility, alongside risks such as misuse and content authenticity, underscoring the importance of responsible release practices and evaluative transparency.

\bibliography{example_paper}

@Article{wan_tech,
  author       = {Ang Wang and
                  Baole Ai and
                  Bin Wen and
                  Chaojie Mao and
                  Chen{-}Wei Xie and
                  Di Chen and
                  Feiwu Yu and
                  Haiming Zhao and
                  Jianxiao Yang and
                  Jianyuan Zeng and
                  Jiayu Wang and
                  Jingfeng Zhang and
                  Jingren Zhou and
                  Jinkai Wang and
                  Jixuan Chen and
                  Kai Zhu and
                  Kang Zhao and
                  Keyu Yan and
                  Lianghua Huang and
                  Xiaofeng Meng and
                  Ningyi Zhang and
                  Pandeng Li and
                  Pingyu Wu and
                  Ruihang Chu and
                  Ruili Feng and
                  Shiwei Zhang and
                  Siyang Sun and
                  Tao Fang and
                  Tianxing Wang and
                  Tianyi Gui and
                  Tingyu Weng and
                  Tong Shen and
                  Wei Lin and
                  Wei Wang and
                  Wei Wang and
                  Wenmeng Zhou and
                  Wente Wang and
                  Wenting Shen and
                  Wenyuan Yu and
                  Xianzhong Shi and
                  Xiaoming Huang and
                  Xin Xu and
                  Yan Kou and
                  Yangyu Lv and
                  Yifei Li and
                  Yijing Liu and
                  Yiming Wang and
                  Yingya Zhang and
                  Yitong Huang and
                  Yong Li and
                  You Wu and
                  Yu Liu and
                  Yulin Pan and
                  Yun Zheng and
                  Yuntao Hong and
                  Yupeng Shi and
                  Yutong Feng and
                  Zeyinzi Jiang and
                  Zhen Han and
                  Zhi{-}Fan Wu and
                  Ziyu Liu},
  title        = {Wan: Open and Advanced Large-Scale Video Generative Models},
  journal      = {arXiv preprint arXiv:2503.20314},
  year         = {2025},
}

@article{hunyuan_tech,
  author       = {Tencent Hunyuan Foundation Model Team},
  title        = {HunyuanVideo 1.5 Technical Report},
  journal      = {arXiv preprint arXiv:2511.18870},
  year         = {2025},
}

@article{longcat_tech,
  author       = {Xunliang Cai and
                  Qilong Huang and
                  Zhuoliang Kang and
                  Hongyu Li and
                  Shijun Liang and
                  Liya Ma and
                  Siyu Ren and
                  Xiaoming Wei and
                  Rixu Xie and
                  Tong Zhang},
  title        = {LongCat-Video Technical Report},
  journal      = {arXiv preprint arXiv:2510.22200},
  year         = {2025},
}

@inproceedings{svdquant,
  author       = {Muyang Li and
                  Yujun Lin and
                  Zhekai Zhang and
                  Tianle Cai and
                  Xiuyu Li and
                  Junxian Guo and
                  Enze Xie and
                  Chenlin Meng and
                  Jun{-}Yan Zhu and
                  Song Han},
  title        = {SVDQuant: Absorbing Outliers by Low-Rank Component for 4-Bit Diffusion
                  Models},
  booktitle    = {International Conference on Learning Representations (ICLR)},
  year         = {2025},
}

@inproceedings{fimaq,
  author       = {Zhuguanyu Wu and
                  Shihe Wang and
                  Jiayi Zhang and
                  Jiaxin Chen and
                  Yunhong Wang},
  title        = {{FIMA-Q:} Post-Training Quantization for Vision Transformers by Fisher
                  Information Matrix Approximation},
  booktitle    = {{IEEE/CVF} Conference on Computer Vision and Pattern Recognition (CVPR)},
  pages        = {14891--14900},
  year         = {2025},
}

@inproceedings{tfmq,
  author       = {Yushi Huang and
                  Ruihao Gong and
                  Jing Liu and
                  Tianlong Chen and
                  Xianglong Liu},
  title        = {{TFMQ-DM:} Temporal Feature Maintenance Quantization for Diffusion
                  Models},
  booktitle    = {{IEEE/CVF} Conference on Computer Vision and Pattern Recognition (CVPR)},
  pages        = {7362--7371},
  year         = {2024},
}

@inproceedings{deepcache,
  author       = {Xinyin Ma and
                  Gongfan Fang and
                  Xinchao Wang},
  title        = {DeepCache: Accelerating Diffusion Models for Free},
  booktitle    = {{IEEE/CVF} Conference on Computer Vision and Pattern Recognition (CVPR)},
  pages        = {15762--15772},
  year         = {2024},
}

@inproceedings{teacache,
  author       = {Feng Liu and
                  Shiwei Zhang and
                  Xiaofeng Wang and
                  Yujie Wei and
                  Haonan Qiu and
                  Yuzhong Zhao and
                  Yingya Zhang and
                  Qixiang Ye and
                  Fang Wan},
  title        = {Timestep Embedding Tells: It's Time to Cache for Video Diffusion Model},
  booktitle    = {{IEEE/CVF} Conference on Computer Vision and Pattern Recognition (CVPR)},
  pages        = {7353--7363},
  year         = {2025},
}

@inproceedings{sid,
  author       = {Mingyuan Zhou and
                  Huangjie Zheng and
                  Zhendong Wang and
                  Mingzhang Yin and
                  Hai Huang},
  title        = {Score identity Distillation: Exponentially Fast Distillation of Pretrained
                  Diffusion Models for One-Step Generation},
  booktitle    = {International Conference on Machine Learning (ICML)},
  year         = {2024},
}

@inproceedings{dmd,
  author       = {Tianwei Yin and
                  Micha{\"{e}}l Gharbi and
                  Richard Zhang and
                  Eli Shechtman and
                  Fr{\'{e}}do Durand and
                  William T. Freeman and
                  Taesung Park},
  title        = {One-Step Diffusion with Distribution Matching Distillation},
  booktitle    = {{IEEE/CVF} Conference on Computer Vision and Pattern Recognition (CVPR)},
  pages        = {6613--6623},
  year         = {2024},
}

@inproceedings{dmd2,
  author       = {Tianwei Yin and
                  Micha{\"{e}}l Gharbi and
                  Taesung Park and
                  Richard Zhang and
                  Eli Shechtman and
                  Fr{\'{e}}do Durand and
                  Bill Freeman},
  editor       = {Amir Globersons and
                  Lester Mackey and
                  Danielle Belgrave and
                  Angela Fan and
                  Ulrich Paquet and
                  Jakub M. Tomczak and
                  Cheng Zhang},
  title        = {Improved Distribution Matching Distillation for Fast Image Synthesis},
  booktitle    = {Advances in Neural Information Processing Systems (NeurIPS)},
  year         = {2024},
}

@article{unifying,
  title={Unifying Flow, Stereo and Depth Estimation},
  author={Xu, Haofei and Zhang, Jing and Cai, Jianfei and Rezatofighi, Hamid and Yu, Fisher and Tao, Dacheng and Geiger, Andreas},
  journal={IEEE Transactions on Pattern Analysis and Machine Intelligence},
  year={2023}
}

@inproceedings{
anonymous2026qvgen,
title={{QVG}en: Pushing the Limit of Quantized Video Generative Models},
author={Yushi Huang and Ruihao Gong and Jing Liu and Yifu Ding and Chengtao Lv and Haotong Qin and Jun Zhang},
booktitle={The Fourteenth International Conference on Learning Representations},
year={2026},
url={https://openreview.net/forum?id=XJXZXuTj11}
}

@misc{huang2025harmonicaharmonizingtraininginference,
      title={HarmoniCa: Harmonizing Training and Inference for Better Feature Caching in Diffusion Transformer Acceleration}, 
      author={Yushi Huang and Zining Wang and Ruihao Gong and Jing Liu and Xinjie Zhang and Jinyang Guo and Xianglong Liu and Jun Zhang},
      year={2025},
      eprint={2410.01723},
      archivePrefix={arXiv},
      primaryClass={cs.CV},
      url={https://arxiv.org/abs/2410.01723}, 
}

@inproceedings{
lv2025fastercache,
title={FasterCache: Training-Free Video Diffusion Model Acceleration with High Quality},
author={Zhengyao Lv and Chenyang Si and Junhao Song and Zhenyu Yang and Yu Qiao and Ziwei Liu and Kwan-Yee K. Wong},
booktitle={The Thirteenth International Conference on Learning Representations},
year={2025},
url={https://openreview.net/forum?id=W49UjcpGxx}
}

@article{fang2024pipefusion,
  title={PipeFusion: Patch-level Pipeline Parallelism for Diffusion Transformers Inference},
  author={Jiarui Fang and Jinzhe Pan and Jiannan Wang and Aoyu Li and Xibo Sun},
  journal={arXiv preprint arXiv:2405.14430},
  year={2024}
}

@misc{ho2022classifierfreediffusionguidance,
      title={Classifier-Free Diffusion Guidance}, 
      author={Jonathan Ho and Tim Salimans},
      year={2022},
      eprint={2207.12598},
      archivePrefix={arXiv},
      primaryClass={cs.LG},
      url={https://arxiv.org/abs/2207.12598}, 
}

@misc{loshchilov2019decoupledweightdecayregularization,
      title={Decoupled Weight Decay Regularization}, 
      author={Ilya Loshchilov and Frank Hutter},
      year={2019},
      eprint={1711.05101},
      archivePrefix={arXiv},
      primaryClass={cs.LG},
      url={https://arxiv.org/abs/1711.05101}, 
}

@misc{zhao2023pytorchfsdpexperiencesscaling,
      title={PyTorch FSDP: Experiences on Scaling Fully Sharded Data Parallel}, 
      author={Yanli Zhao and Andrew Gu and Rohan Varma and Liang Luo and Chien-Chin Huang and Min Xu and Less Wright and Hamid Shojanazeri and Myle Ott and Sam Shleifer and Alban Desmaison and Can Balioglu and Pritam Damania and Bernard Nguyen and Geeta Chauhan and Yuchen Hao and Ajit Mathews and Shen Li},
      year={2023},
      eprint={2304.11277},
      archivePrefix={arXiv},
      primaryClass={cs.DC},
      url={https://arxiv.org/abs/2304.11277}, 
}

@misc{goodfellow2014generativeadversarialnetworks,
      title={Generative Adversarial Networks}, 
      author={Ian J. Goodfellow and Jean Pouget-Abadie and Mehdi Mirza and Bing Xu and David Warde-Farley and Sherjil Ozair and Aaron Courville and Yoshua Bengio},
      year={2014},
      eprint={1406.2661},
      archivePrefix={arXiv},
      primaryClass={stat.ML},
      url={https://arxiv.org/abs/1406.2661}, 
}

@article{benyahia2024mobilevd,
author={Ben Yahia, Haitam and Korzhenkhov, Denis and Lelekas, Ioannis and Ghodrati, Amir and Habibian, Amirhossein},
title={Mobile Video Diffusion},
journal={arXiv},
year={2024},
url={https://arxiv.org/abs/2412.07583}
}

@article{phaseddmd,
  author       = {Xiangyu Fan and
                  Zesong Qiu and
                  Zhuguanyu Wu and
                  Fanzhou Wang and
                  Zhiqian Lin and
                  Tianxiang Ren and
                  Dahua Lin and
                  Ruihao Gong and
                  Lei Yang},
  title        = {Phased {DMD:} Few-step Distribution Matching Distillation via Score
                  Matching within Subintervals},
  journal      = {arXiv preprint arXiv:2510.27684},
  year         = {2025},
}

@inproceedings{psdm,
  author       = {Andy Shih and
                  Suneel Belkhale and
                  Stefano Ermon and
                  Dorsa Sadigh and
                  Nima Anari},
  title        = {Parallel Sampling of Diffusion Models},
  booktitle    = {Advances in Neural Information Processing Systems (NeurIPS)},
  year         = {2023},
}

@inproceedings{sim,
  author       = {Weijian Luo and
                  Zemin Huang and
                  Zhengyang Geng and
                  J. Zico Kolter and
                  Guo{-}Jun Qi},
  title        = {One-Step Diffusion Distillation through Score Implicit Matching},
  booktitle    = {Advances in Neural Information Processing Systems (NeurIPS)},
  year         = {2024},
}

@InProceedings{vbench,
     title={{VBench}: Comprehensive Benchmark Suite for Video Generative Models},
     author={Huang, Ziqi and He, Yinan and Yu, Jiashuo and Zhang, Fan and Si, Chenyang and Jiang, Yuming and Zhang, Yuanhan and Wu, Tianxing and Jin, Qingyang and Chanpaisit, Nattapol and Wang, Yaohui and Chen, Xinyuan and Wang, Limin and Lin, Dahua and Qiao, Yu and Liu, Ziwei},
     booktitle={{IEEE/CVF} Conference on Computer Vision and Pattern Recognition (CVPR)},
     year={2024}
 }

@inproceedings{selfforcing,
  title={Self Forcing: Bridging the Train-Test Gap in Autoregressive Video Diffusion},
  author={Huang, Xun and Li, Zhengqi and He, Guande and Zhou, Mingyuan and Shechtman, Eli},
  booktitle={Advances in Neural Information Processing Systems (NeurIPS)},
  year={2025}
}

@inproceedings{shortcut,
    title={One Step Diffusion via Shortcut Models},
    author={Kevin Frans and Danijar Hafner and Sergey Levine and Pieter Abbeel},
    booktitle={International Conference on Learning Representations (ICLR)},
    year={2025},
}

@inproceedings{meanflow,
title={Mean Flows for One-step Generative Modeling},
author={Zhengyang Geng and Mingyang Deng and Xingjian Bai and J Zico Kolter and Kaiming He},
booktitle={Advances in Neural Information Processing Systems (NeurIPS)},
year={2025},
}

@inproceedings{ddpm,
 author = {Ho, Jonathan and Jain, Ajay and Abbeel, Pieter},
 booktitle = {Advances in Neural Information Processing Systems (NeurIPS)},
 pages = {6840--6851},
 title = {Denoising Diffusion Probabilistic Models},
 year = {2020}
}

@inproceedings{ddim,
  title={Denoising Diffusion Implicit Models},
  author={Song, Jiaming and Meng, Chenlin and Ermon, Stefano},
  booktitle={International Conference on Learning Representations (ICLR)},
  year={2020},
}

@inproceedings{fvd,
title={{FVD}: A new Metric for Video Generation},
author={Thomas Unterthiner and Sjoerd van Steenkiste and Karol Kurach and Rapha{\"e}l Marinier and Marcin Michalski and Sylvain Gelly},
booktitle={International Conference on Learning Representations (ICLR)},
year={2019},
}

@inproceedings{videoalign,
title={Improving Video Generation with Human Feedback},
author={Jie Liu and Gongye Liu and Jiajun Liang and Ziyang Yuan and Xiaokun Liu and Mingwu Zheng and Xiele Wu and Qiulin Wang and Menghan Xia and Xintao Wang and Xiaohong Liu and Fei Yang and Pengfei Wan and Di ZHANG and Kun Gai and Yujiu Yang and Wanli Ouyang},
booktitle={Advances in Neural Information Processing Systems (NeurIPS)},
year={2026},
}

@article{flashdmd,
title={Flash-DMD: Towards High-Fidelity Few-Step Image Generation with Efficient Distillation and Joint Reinforcement Learning},
author={Guanjie Chen and Shirui Huang and Kai Liu and Jianchen Zhu and Xiaoye Qu and Peng Chen and Yu Cheng and Yifu Sun},
journal={arXiv preprint arXiv:2511.20549},
year={2025},
}

@inproceedings{aphq,
  author       = {Zhuguanyu Wu and
                  Jiayi Zhang and
                  Jiaxin Chen and
                  Jinyang Guo and
                  Di Huang and
                  Yunhong Wang},
  title        = {APHQ-ViT: Post-Training Quantization with Average Perturbation Hessian
                  Based Reconstruction for Vision Transformers},
  booktitle    = {{IEEE/CVF} Conference on Computer Vision and Pattern Recognition (CVPR)},
  year         = {2025},
}

@inproceedings{
rcm,
title={Large Scale Diffusion Distillation via Score-Regularized Continuous-Time Consistency},
author={Kaiwen Zheng and Yuji Wang and Qianli Ma and Huayu Chen and Jintao Zhang and Yogesh Balaji and Jianfei Chen and Ming-Yu Liu and Jun Zhu and Qinsheng Zhang},
booktitle={International Conference on Learning Representations (ICLR)},
year={2026},
}

@misc{flagos,
  author       = {{FlagOS Team}},
  title        = {FlagOS: A Unified, Open-Source AI System Software Stack},
  year         = {2025},
  howpublished = {\url{https://github.com/flagos-ai}}
}

@inproceedings{yang2021r3det,
  title={R3det: Refined single-stage detector with feature refinement for rotating object},
  author={Yang, Xue and Yan, Junchi and Feng, Ziming and He, Tao},
  booktitle={Proceedings of the AAAI conference on artificial intelligence},
  year={2021}
}

@inproceedings{yang2019deep,
  title={Deep spectral clustering using dual autoencoder network},
  author={Yang, Xu and Deng, Cheng and Zheng, Feng and Yan, Junchi and Liu, Wei},
  booktitle={{IEEE/CVF} Conference on Computer Vision and Pattern Recognition (CVPR)},
  year={2019}
}

@inproceedings{li2018self,
  title={Self-supervised adversarial hashing networks for cross-modal retrieval},
  author={Li, Chao and Deng, Cheng and Li, Ning and Liu, Wei and Gao, Xinbo and Tao, Dacheng},
  booktitle={{IEEE/CVF} Conference on Computer Vision and Pattern Recognition (CVPR)},
  year={2018}
}

@article{deng2018triplet,
  title={Triplet-based deep hashing network for cross-modal retrieval},
  author={Deng, Cheng and Chen, Zhaojia and Liu, Xianglong and Gao, Xinbo and Tao, Dacheng},
  journal={IEEE Transactions on Image Processing},
  year={2018},
}

@inproceedings{gong2019differentiable,
  title={Differentiable soft quantization: Bridging full-precision and low-bit neural networks},
  author={Gong, Ruihao and Liu, Xianglong and Jiang, Shenghu and Li, Tianxiang and Hu, Peng and Lin, Jiazhen and Yu, Fengwei and Yan, Junjie},
  booktitle={Proceedings of the IEEE/CVF international conference on computer vision (ICCV)},
  year={2019}
}

@article{2025-tpami-qlimit,
 author={Gong, Ruihao and Liu, Xianglong and Li, Yuhang and Fan, Yunqiang and Wei, Xiuying and Guo, Jinyang},
 journal={IEEE Transactions on Pattern Analysis and Machine Intelligence},
 title={Pushing the Limit of Post-Training Quantization},
 year={2025},
}

@article{2025-nn-low-bit-survey,
 title = {A Survey of Low-bit Large Language Models: Basics, Systems, and Algorithms},
 journal = {Neural Networks},
 year = {2025},
 author = {Gong, Ruihao and Ding, Yifu and Wang, Zining and Lv, Chengtao and Zheng, Xingyu and Du, Jinyang and Yong, Yang and Gu, Shiqiao and Qin, Haotong and Guo, Jinyang and Lin, Dahua and Magno, Michele and Liu, Xianglong},
}

@inproceedings{2021-iclr-brecq,
 title={BRECQ: Pushing the Limit of Post-Training Quantization by Block Reconstruction},
 author={Yuhang Li and Ruihao Gong and Xu Tan and Yang Yang and Peng Hu and Qi Zhang and Fengwei Yu and Wei Wang and Shi Gu},
 booktitle={International Conference on Learning Representations (ICLR},
 year={2021}
}
\bibliographystyle{icml2026}

\newpage
\appendix
\onecolumn

\section{Additional Proofs}

\subsection{A formal justification of the fake-score perspective}
\label{app:fake_score_persp_rationale}
We formalize the fake-score perspective without committing to any particular loss form.
Let $s_{\text{fake}}(\cdot,t)$ be the learned fake score and $q_{\theta,t}$ be the generator-induced noisy-state distribution.
Define the score-consistency set (a constraint manifold)
\begin{equation}
\mathcal{M} \;:=\; \big\{(\theta,\psi):\ s_{\text{fake}}(x_t,t)=s_{q_{\theta,t}}(x_t)\ \text{for $q_{\theta,t}$-a.e. }x_t\big\},
\label{eq:manifold_M}
\end{equation}
and let $\mathcal{F}(\psi)$ denote an abstract teacher-alignment functional whose minimizer corresponds to matching the teacher score (\eg a Fisher score-matching divergence under a teacher stop-gradient design).
Then optimizing the fake score ``as the objective'' while using the generator to ``track'' can be viewed as minimizing $\mathcal{F}$ restricted to $\mathcal{M}$.

\begin{proposition}[Constrained-view justification of the fake-score perspective]
\label{prop:fake_score_persp_constrained}
Assume $\mathcal{M}$ is non-empty and that for each $\psi$ in a neighborhood of interest there exists $\theta(\psi)$ such that $(\theta(\psi),\psi)\in\mathcal{M}$.
Define the reduced objective $\widetilde{\mathcal{F}}(\psi):=\mathcal{F}(\psi)$ subject to $(\theta(\psi),\psi)\in\mathcal{M}$.
Then any stationary point $\psi^\star$ of $\widetilde{\mathcal{F}}$ admits a compatible generator parameter $\theta^\star=\theta(\psi^\star)$ such that $(\theta^\star,\psi^\star)\in\mathcal{M}$, and the optimization can be interpreted as: (i) updating $\psi$ to decrease teacher mismatch, and (ii) updating $\theta$ to stay on (or close to) $\mathcal{M}$ so that $s_{\text{fake}}$ remains the score of the current generator-induced distribution.
\end{proposition}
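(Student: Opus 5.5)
The plan is to treat the statement as a constrained-optimization reduction and verify its two claims directly from the definitions. Under the standing assumption, the map $\psi\mapsto\theta(\psi)$ is defined on the neighborhood of interest. We therefore define the reduced objective $\widetilde{\mathcal{F}}(\psi)=\mathcal{F}(\psi)$ on the set $\{\psi:(\theta(\psi),\psi)\in\mathcal{M}\}$, which by assumption is the whole neighborhood.

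The first claim, existence of a compatible generator, is almost tautological. Take a stationary point $\psi^\star$ of $\widetilde{\mathcal{F}}$ inside this neighborhood and set $\theta^\star:=\theta(\psi^\star)$. The assumption then gives $(\theta^\star,\psi^\star)\in\mathcal{M}$, i.e.\ $s_{\text{fake}}(\cdot,t;\psi^\star,\theta^\star)=s_{q_{\theta^\star,t}}$ for $q_{\theta^\star,t}$-a.e.\ $x_t$. No regularity of $\theta(\cdot)$ is needed for this step; only the selection itself is used.

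The second claim is the interpretive decomposition, and I would make it precise via a Lagrangian or penalty view. Write the consistency constraint as the residual functional $R(\theta,\psi):=\mathbb{E}_{q_{\theta,t}}\|s_{\text{fake}}-s_{q_{\theta,t}}\|^2$, so that $\mathcal{M}=\{R=0\}$. The constrained problem $\min_{\psi,\theta}\mathcal{F}(\psi)$ subject to $R(\theta,\psi)=0$ has a penalized surrogate $\mathcal{F}(\psi)+\lambda R(\theta,\psi)$. Block-coordinate descent on this surrogate then splits into two steps. The $\psi$-step descends $\mathcal{F}$, which is item (i), plus a consistency term. The $\theta$-step descends only $\lambda R$, since $\mathcal{F}$ does not depend on $\theta$. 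This $\theta$-step is exactly ``updating $\theta$ to stay on or close to $\mathcal{M}$'', which is item (ii).

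If one additionally assumes $\theta(\cdot)$ is differentiable, for instance via the implicit function theorem applied to $R$ or to its first-order condition, then stationarity of $\widetilde{\mathcal{F}}$ at $\psi^\star$ coincides with the first-order condition of this alternating scheme restricted to $\mathcal{M}$. I would remark that this is the structural analogue of the NR/RC split in Eq.~\eqref{eq:dual_potentials}.

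The main obstacle is that ``interpreted as'' is not a sharp mathematical claim, and $\theta(\psi)$ need not be unique or smooth. I would handle this by keeping the rigorous content limited to the existence of the compatible $\theta^\star$. The decomposition would be stated as a consequence of the penalty formulation under an explicit smoothness assumption on $\theta(\cdot)$, rather than attempting a general characterization.
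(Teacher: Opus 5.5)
Your existence argument is exactly the paper's. Feasibility on the neighborhood is the standing assumption, and $\theta^\star:=\theta(\psi^\star)$ lies in $\mathcal{M}$ by construction. Stationarity is never actually used: every $\psi$ in the neighborhood has a compatible generator.

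The difference is in (ii). The paper only asserts that ``interpreting the constrained optimization algorithmically'' yields the two roles. You instead supply a mechanism: block-coordinate descent on $\mathcal{F}(\psi)+\lambda R(\theta,\psi)$ with $\mathcal{M}=\{R=0\}$. There the $\theta$-block sees only $\lambda R$ because $\mathcal{F}$ does not depend on $\theta$. This turns the paper's gloss into a checkable statement, at the cost of committing to one algorithmic model that the paper leaves abstract.

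Two of your side remarks need correcting.

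First, drop the smoothness hypothesis on $\theta(\cdot)$. The implicit function theorem applied to $R=0$ is degenerate. Since $R\ge 0$ vanishes on $\mathcal{M}$, every point of $\mathcal{M}$ is a global minimizer of $R$, so $\nabla_\theta R=\nabla_\psi R=0$ there (when $R$ is differentiable). That same fact gives you what you want for free. Because $\widetilde{\mathcal{F}}\equiv\mathcal{F}$ on the neighborhood, stationarity of $\widetilde{\mathcal{F}}$ at $\psi^\star$ is just $\nabla\mathcal{F}(\psi^\star)=0$. The fixed-point conditions of your alternating scheme at any point of $\mathcal{M}$ reduce to exactly this, with no regularity of the selection needed.

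Second, the claimed analogy with the NR/RC split does not match Algorithm~\ref{alg:sgmd}. There $\mathcal{L}_{\mathrm{Fisher}}$ sits in the $\theta$-update, and $\psi$ receives only $\lambda\mathcal{L}_{\mathrm{RC}}$. So the teacher-alignment term is on the opposite block from your split. Also, $\mathcal{L}_{\mathrm{NR}}=-\mathcal{L}_{\mathrm{RC}}$ is not a shared penalty used in both blocks. You should drop that remark or confine it to the idealized split stated in the proposition.
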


\begin{proof}
By assumption, for each $\psi$ in the neighborhood there exists at least one $\theta(\psi)$ such that $(\theta(\psi),\psi)\in\mathcal{M}$, hence the constrained problem defining $\widetilde{\mathcal{F}}$ is feasible.
Let $\psi^\star$ be a stationary point of $\widetilde{\mathcal{F}}$ and define $\theta^\star:=\theta(\psi^\star)$.
Then $(\theta^\star,\psi^\star)\in\mathcal{M}$ by construction, which establishes the existence of a compatible generator parameter at $\psi^\star$.
Moreover, interpreting the constrained optimization algorithmically yields the two roles:
updates of $\psi$ target decreasing $\mathcal{F}$ (teacher alignment), while updates of $\theta$ act to restore feasibility (stay on or near $\mathcal{M}$), ensuring $s_{\text{fake}}$ remains compatible with the current generator-induced distribution.
\end{proof}

\subsection{Why $\lambda$ should be moderate}
\label{app:lambda_analysis}
We give a simple (local, stylized) two-timescale view that makes the trade-off in $\lambda$ explicit.
Consider one coupled iteration $k$: the generator first updates $\theta$ (hence $x_0$ moves from $x_{0,k}$ to $x_{0,k+1}$), and then the fake score updates by residual contraction whose stop-gradient target is still $x_{0,k}$.

Let $r_k := x_{0,k}-x_{\text{fake},k}$ denote the tracking residual in the $x_{\text{fake}}$-space view (Eq.~\eqref{eq:dual_grads}).
Approximating the inner update as a direct gradient step on $x_{\text{fake}}$,
\(
x_{\text{fake},k+1}=x_{\text{fake},k}-\eta_\psi\lambda\,\nabla_{x_{\text{fake}}}\mathcal{L}_{\mathrm{RC}}
=x_{\text{fake},k}-\eta_\psi\lambda(x_{\text{fake},k}-x_{0,k}),
\)
we obtain the linear recursion
\begin{equation}
r_{k+1}=(1-\eta_\psi\lambda)\,r_k+\Delta x_{0,k},\qquad \Delta x_{0,k}:=x_{0,k+1}-x_{0,k}.
\label{eq:lambda_residual_recursion}
\end{equation}

\begin{proposition}[A simple $\lambda$ trade-off: stronger contraction but larger staleness]\label{prop:lambda_staleness}
Assume $0<\eta_\psi\lambda<1$.
Suppose the generator step satisfies a bound of the form
\begin{equation}
\|\Delta x_{0,k}\|\le \eta_\theta(A+\lambda B)\qquad \text{for some constants }A,B\ge 0.
\label{eq:delta_x0_bound}
\end{equation}
Then:
\textbf{(i)} Smaller $\lambda$ yields a looser asymptotic tracking-residual bound (scales as $A/\lambda$ when $A>0$).
\textbf{(ii)} Larger $\lambda$ yields a larger staleness bound (scales as $\lambda B$ when $B>0$).
\end{proposition}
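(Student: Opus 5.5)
The plan is to unroll the linear recursion \eqref{eq:lambda_residual_recursion} and bound it with the triangle inequality. Write $\rho:=1-\eta_\psi\lambda\in(0,1)$, which is guaranteed by the assumption $0<\eta_\psi\lambda<1$. Induction on $k$ gives
\begin{equation*}
r_k=\rho^k r_0+\sum_{j=0}^{k-1}\rho^{k-1-j}\,\Delta x_{0,j}.
\end{equation*}
Taking norms and inserting the step bound \eqref{eq:delta_x0_bound} yields
\begin{equation*}
\|r_k\|\le \rho^k\|r_0\|+\eta_\theta(A+\lambda B)\sum_{i=0}^{k-1}\rho^i .
\end{equation*}
The geometric sum is at most $1/(1-\rho)=1/(\eta_\psi\lambda)$. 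Hence
\begin{equation*}
\|r_k\|\le \rho^k\|r_0\|+\frac{\eta_\theta}{\eta_\psi}\Big(\frac{A}{\lambda}+B\Big),
\end{equation*}
and, since $\rho^k\to0$,
\begin{equation*}
\limsup_{k\to\infty}\|r_k\|\le \frac{\eta_\theta}{\eta_\psi}\Big(\frac{A}{\lambda}+B\Big).
\end{equation*}

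For part \textbf{(i)}, I will read off the $\lambda$-dependence of this asymptotic bound. The $A$-part scales as $A/\lambda$, so it grows without bound as $\lambda\to0$ whenever $A>0$. The transient factor $\rho^k=(1-\eta_\psi\lambda)^k$ also decays more slowly for smaller $\lambda$, which reinforces the claim. This shows that weak contraction gives a looser tracking-residual bound.

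For part \textbf{(ii)}, I first need a precise notion of staleness. The natural choice is the one-step lag $\|\Delta x_{0,k}\|$: it is the gap between the stop-gradient target $x_{0,k}$ that RC regresses toward and the actual post-update sample $x_{0,k+1}$. This gap is exactly the additive forcing term in the recursion. By \eqref{eq:delta_x0_bound}, $\|\Delta x_{0,k}\|\le\eta_\theta A+\eta_\theta\lambda B$, and the second term grows linearly in $\lambda$ when $B>0$.

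I can make this sharper by looking at the per-step error of the inner update measured against the fresh target. This is $\|x_{\text{fake},k+1}-x_{0,k+1}\|=\|r_{k+1}\|$. Its stale contribution is $\|\Delta x_{0,k}\|$, which the contraction cannot remove within the iteration, and which is bounded by $\eta_\theta\lambda B$ plus a $\lambda$-independent term. I would also note that the $B$-part of the asymptotic bound stays at $\eta_\theta B/\eta_\psi$ and does not shrink as $\lambda$ grows. So increasing $\lambda$ cannot suppress the staleness-driven error, while the per-step lag grows.

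The main obstacle is not a computation but the interpretive step in (ii). The proposition does not define \emph{staleness bound} formally, so I must commit to defining it as the bound on $\|\Delta x_{0,k}\|$, equivalently the forcing term left uncorrected by the one-step-lagged RC target. I must also argue that this is the quantity whose $\lambda B$ scaling is meant. Once that definition is fixed, both parts follow from the unrolled recursion and the geometric series bound.
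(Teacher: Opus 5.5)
Your proposal is correct and matches the paper's proof essentially step for step: the paper also sets $\rho=1-\eta_\psi\lambda$, unrolls the residual recursion under the triangle inequality to get $\limsup_k\|r_k\|\le \eta_\theta(A+\lambda B)/(\eta_\psi\lambda)$, and reads off the $A/\lambda$ scaling. For (ii), the paper defines staleness as $\|g_k-g_k^{\mathrm{fresh}}\|$ with $g_k=x_{\text{fake},k}-\operatorname{sg}[x_{0,k}]$ and $g_k^{\mathrm{fresh}}=x_{\text{fake},k}-\operatorname{sg}[x_{0,k+1}]$; this difference equals your $\Delta x_{0,k}$, so your choice of staleness measure coincides with the paper's and gives the same bound $\eta_\theta(A+\lambda B)$.
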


\begin{proof}
\textbf{(i) Tracking-residual bound.}
Let $\rho:=1-\eta_\psi\lambda\in(0,1)$. From Eq.~\eqref{eq:lambda_residual_recursion} and the triangle inequality,
\begin{equation}
\|r_{k+1}\|\le \rho\,\|r_k\|+\|\Delta x_{0,k}\|.
\label{eq:lambda_residual_ineq}
\end{equation}
Applying Eq.~\eqref{eq:delta_x0_bound} gives $\|\Delta x_{0,k}\|\le M$ with $M:=\eta_\theta(A+\lambda B)$, hence
\(
\|r_{k+1}\|\le \rho\,\|r_k\|+M.
\)
Unrolling the recursion yields
\(
\|r_k\|\le \rho^k\|r_0\|+M\sum_{i=0}^{k-1}\rho^i
=\rho^k\|r_0\|+\frac{M(1-\rho^k)}{1-\rho}.
\)
Taking $\limsup$ and using $1-\rho=\eta_\psi\lambda$ gives
\begin{equation}
\limsup_{k\rightarrow\infty}\|r_k\|\le \frac{\eta_\theta(A+\lambda B)}{\eta_\psi\lambda}.
\label{eq:steady_state_residual}
\end{equation}

\textbf{(ii) Staleness bound.}
For staleness, by definition $g_k:=x_{\text{fake},k}-\operatorname{sg}[x_{0,k}]$ and $g_k^{\mathrm{fresh}}:=x_{\text{fake},k}-\operatorname{sg}[x_{0,k+1}]$, hence
\(
g_k-g_k^{\mathrm{fresh}}=\operatorname{sg}[x_{0,k+1}-x_{0,k}]=\Delta x_{0,k}.
\)
Taking norms and using Eq.~\eqref{eq:delta_x0_bound} yields
\begin{equation}
\|g_k-g_k^{\mathrm{fresh}}\|=\|\Delta x_{0,k}\|\le \eta_\theta(A+\lambda B),
\label{eq:staleness_grad_error}
\end{equation}

\textbf{Combining (i)--(ii).} Define a simple proxy of the coupled-iteration inaccuracy,
\(
\mathcal{E}(\lambda):=\limsup_k\|r_k\|+\limsup_k\|g_k-g_k^{\mathrm{fresh}}\|.
\)
Substituting Eqs.~\eqref{eq:steady_state_residual}--\eqref{eq:staleness_grad_error} gives an upper bound of the form
\begin{equation}
\mathcal{E}(\lambda)\ \lesssim\ \underbrace{\frac{C_1}{\lambda}}_{\text{insufficient correction}}\ +\ \underbrace{C_2\lambda}_{\text{staleness increases}},
\label{eq:lambda_u_shape_proxy}
\end{equation}
for constants $C_1,C_2>0$ depending on $(\eta_\theta,\eta_\psi,A,B)$, which is minimized at a moderate $\lambda$.
\end{proof}

\subsection{SiD vs.\ SGMD: a fake-score perspective}
\label{app:sim_sid_sgmd}
The main text already discusses SIM vs.\ SGMD under the fake-score perspective (Sec.~\ref{sec:grad_analysis}).
Here we focus on SiD \cite{sid}.
Recall that SIM uses a fixed combination (Eq.~\eqref{eq:sim_rewrite})
\(
\mathcal{L}_{\mathrm{SIM}}(\theta)=\mathcal{L}^{(1)}(\theta)+\mathcal{L}^{(2)}(\theta)
\)
(see also Eq.~(16)), i.e., the ratio between the explicit term $\mathcal{L}^{(1)}$ and the implicit term $\mathcal{L}^{(2)}$ is fixed.
In contrast, SiD can be viewed as reweighting the implicit term:
\begin{equation}
\mathcal{L}_{\mathrm{SiD}}(\theta)
\;=\;
\mathcal{L}^{(1)}(\theta)\;+\;\alpha\,\mathcal{L}^{(2)}(\theta),
\label{eq:sid_obj_alpha}
\end{equation}
for a tunable scalar $\alpha$.

\paragraph{Equivalent fake-score gradient for SiD.}
Following the notation in Sec.~\ref{sec:grad_analysis}, write $x_{\text{fake}}:=\mu_{\text{fake}}(x_t,t)$ and $x_{\text{real}}:=\mu_{\text{real}}(x_t,t)$.
Using $\Delta_t=x_{\text{fake}}-x_{\text{real}}$ and $r_t=x_0-x_{\text{fake}}$ (Eq.~(16)), we have
\(
\mathcal{L}_{\mathrm{SiD}}(\theta)=c(t)\big(\tfrac12\|\Delta_t\|^2+\alpha\,\Delta_t^\top r_t\big)
\)
up to terms independent of $\theta$.
Expanding in the $x$-prediction form gives
\begin{equation}
\begin{aligned}
\mathcal{L}_{\mathrm{SiD}}(\theta)
&= c(t)\Big(
\tfrac{1}{2}\|x_{\text{real}}\|^{2}
 + (\tfrac{1}{2}-\alpha)\|x_{\text{fake}}\|^{2}
 + (\alpha-1)\langle x_{\text{fake}},x_{\text{real}}\rangle
\\&\qquad\quad
 + \alpha\langle x_0,x_{\text{fake}}\rangle-\alpha\langle x_0,x_{\text{real}}\rangle
\Big),
\end{aligned}
\label{eq:sid_rewrite}
\end{equation}
which reduces to Eq.~\eqref{eq:sim_rewrite} when $\alpha=1$ (SIM).
Taking the total derivative w.r.t.\ $x_{\text{fake}}$ (including the $\theta$-induced dependence $x_0=x_0(x_{\text{fake}})$) yields the effective direction:
\begin{equation}
\begin{aligned}
\nabla_{x_{\text{fake}}}\mathcal{L}_{\mathrm{SiD}}
&=
c(t)\Big(
\alpha(x_0-x_{\text{fake}})
 + (1-\alpha)(x_{\text{fake}}-x_{\text{real}})
 + \alpha\Big(\tfrac{d x_{0}}{d x_{\text{fake}}}\Big)^{\!*}(x_{\text{fake}}-x_{\text{real}})
\Big),
\end{aligned}
\label{eq:sid_grad}
\end{equation}
where $(\cdot)^{*}$ denotes the adjoint (VJP) operator under the Frobenius inner product, consistent with Eq.~\eqref{eq:sim_grad}.
Eq.~\eqref{eq:sid_grad} makes explicit that changing $\alpha$ not only rescales the SIM-style tracking term, but also introduces an additional bias term proportional to $(1-\alpha)(x_{\text{fake}}-x_{\text{real}})$ in the fake-score view, which is absent when $\alpha=1$.

\section{PyTorch-style pseudocode}
\label{app:pytorch_pseudocode}
We provide PyTorch-style pseudocode for SGMD to clarify the stop-gradient design and the two-backward update per iteration.

\begin{lstlisting}[language=Python]

for batch in dataloader:
    # sample noise level
    t = sample_t(batch_size)
    alpha_t, sigma_t = 1 - t, t

    # generator forward
    x0 = G_theta(batch)  # may include conditioning (e.g., text/image)
    eps = torch.randn_like(x0)
    x_t = alpha_t * x0 + t * eps

    # forward passes (shared for both updates)
    x_fake = fake_score(x_t, t)
    with torch.no_grad():
        x_real_cond = teacher(x_t.detach(), t, cond)
        x_real_uncond = teacher(x_t.detach(), t, uncond)
        x_real = x_real_cond + cfg_scale * (x_real_cond - x_real_uncond)

    delta = x_fake - x_real
    c = alpha_t**2 / sigma_t**4
    L_fisher = 0.5 * (c * (delta**2)).mean()

    # residual uses stop-grad on x0
    r = x0.detach() - x_fake
    L_NR = -0.5 * (r**2).mean()
    L_RC =  0.5 * (r**2).mean()

    # 1) update generator
    opt_G.zero_grad()
    (L_fisher + lamb * L_NR).backward()
    opt_G.step()

    # 2) update fake-score
    opt_F.zero_grad()
    (lamb * L_RC).backward()
    opt_F.step()
\end{lstlisting}

\section{Training-time breakdown}
\label{app:efficiency_calc}
We provide a simple operator-count and wall-clock estimate to compare SGMD with a DMD2-style baseline under our settings.
We use a representative setting where the DMD2-style baseline performs one generator update and $K=5$ fake-score updates per iteration.
For 4-step distillation, we approximate the backward simulation cost as $2.5$ forward evaluations (due to solver unrolling).

\paragraph{Operator count.}
\textbf{SGMD} uses an expected $F_{\text{SGMD}}\approx 6.5$ forward evaluations per iteration and performs one short-range backward and one long-range backward.
\textbf{DMD2-style baseline} uses $F_{\text{baseline}}\approx 5.5(1+K)=33$ forward evaluations per iteration and performs $B_{\text{baseline}}^{\text{short}}=1+K=6$ short-range backwards (no long-range backward).

\paragraph{Wall-clock estimate.}
Under our training settings, one forward evaluation takes $\approx 5$s, one short-range backward takes $\approx 15$s, and one long-range backward takes $\approx 30$s.
Therefore,
\[
T_{\text{SGMD}} \approx 6.5\times 5 + 1\times 30 + 1\times 15 = 77.5\ \text{s},
\qquad
T_{\text{baseline}} \approx 33\times 5 + 6\times 15 = 255\ \text{s},
\]
yielding an overall speedup of \(T_{\text{baseline}}/T_{\text{SGMD}} \approx 3.3\times\) (about \(\sim 3\times\) in practice).

\section{1D mixture fitting details: reverse-KL vs.\ Fisher}
\label{app:toy_kl_vs_fisher}
We design a simple 1D mixture-fitting problem to qualitatively contrast reverse-KL and Fisher divergence matching and to illustrate why reverse-KL-style updates can be more conservative, while Fisher-style updates provide smoother matching signals.

\paragraph{Target distribution.}
We fix an asymmetric two-component Gaussian mixture
\[
p(x)=0.75\,\mathcal{N}(x;-1.2,0.55^2)+0.25\,\mathcal{N}(x;2.0,0.85^2).
\]

\paragraph{Model family.}
We use a symmetric two-component mixture with equal weights,
\[
q_{\phi}(x)=\tfrac12\mathcal{N}(x;+m,s^2)+\tfrac12\mathcal{N}(x;-m,s^2),
\]
where $\phi=(m,s)$ with $m\ge 0$ and $s>0$.

\paragraph{Objectives.}
We compare (i) reverse-KL (KL($q\Vert p$)),
\[
\mathrm{KL}(q\Vert p)=\int q(x)\log\frac{q(x)}{p(x)}\,dx,
\]
and (ii) Fisher divergence,
\[
\mathcal{F}(q,p)=\int q(x)\left(\partial_x\log q(x)-\partial_x\log p(x)\right)^2dx.
\]
Reverse-KL (KL($q\Vert p$)) assigns a large penalty when $q$ places non-trivial mass in regions where $p(x)$ is small; this tends to discourage exploratory mass in low-probability regions and leads to more conservative updates. In contrast, the Fisher divergence emphasizes matching local score fields and typically yields a smoother, more global correction signal.

\paragraph{Numerical approximation and optimization.}
All integrals are approximated by numerical quadrature on a fixed uniform grid $x\in[-7,7]$ with $4001$ points. We optimize $\phi$ by gradient-based updates (Adam) for $2500$ steps with learning rate $5\times 10^{-2}$, starting from a common initialization $(m,s)=(1.0,1.2)$.
Fig.~\ref{fig:toy_kl_vs_fisher} visualizes the fitted $q_\phi$ under the two objectives against the fixed target $p$.

\section{Qualitative prompts}
\label{app:qual_prompts}
We list the prompts used for the qualitative comparison in Fig.~\ref{fig:vis}.
\begin{table}[h]
\centering
\small
\setlength{\tabcolsep}{6pt}
\renewcommand{\arraystretch}{1.15}
\begin{tabular}{c p{0.86\linewidth}}
\toprule
Video & Prompt \\
\midrule
1 & Tracking shot, daytime, sunlight, medium wide shot, side lighting, warm colors. A skateboarder glides down a bustling city street, performing smooth tricks amidst the urban landscape. The sun casts a golden glow, highlighting the skateboarder's agile movements. Cars and pedestrians pass by in the background, adding to the dynamic energy of the scene. The skateboarder's hair flows freely as they execute a series of jumps and spins, showcasing their skill and grace. \\
\midrule
2 & Day time, sunny lighting, side lighting, wide shot, center composition. A skier accelerates down a steep slope during a downhill race. The sun casts sharp shadows on the snow-covered mountainside, highlighting the skier's determined expression and sleek skiing gear. Snowflakes drift in the cold air, adding a touch of movement to the scene. Trees line the edge of the slope, their branches swaying slightly in the breeze. The skier leans forward, arms extended for balance, as they carve through the powder snow at high speed. \\
\midrule
3 & A high-speed video of a water-filled balloon being sliced open, with water flowing out in a controlled manner. The scene is set during daytime with sunlight streaming through a window, casting soft lighting. The camera captures a medium shot, focusing on the precise moment the balloon is cut, revealing droplets of water glistening as they fall in slow motion. In the background, leaves rustle gently in the breeze, adding movement to the scene. \\
\midrule
4 & In a stunning moment captured at sunrise time, a penguin flies dramatically into the mouth of a blue whale as it breaks through the surface of the ocean. Soft sunlight bathes the scene in warm hues, casting gentle side lighting that highlights the spray of water and the sleek bodies of both animals. The camera captures this extraordinary event from a medium wide shot, emphasizing the vastness of the ocean and the remarkable scale of the interaction. Clouds drift lazily across the sky, adding movement to the serene backdrop. \\
\bottomrule
\end{tabular}
\caption{Prompts corresponding to the four videos in Fig.~\ref{fig:vis} (in order).}
\label{tab:qual_prompts}
\end{table}
\end{document}